\documentclass[twoside]{article}
\usepackage[preprint]{aistats2026}
\usepackage{natbib} 
\usepackage{xurl}

\usepackage[utf8]{inputenc} % allow utf-8 input
\usepackage[T1]{fontenc}    % use 8-bit T1 fonts
\usepackage{hyperref}       % hyperlinks
\usepackage{url}            % simple URL typesetting
\usepackage{booktabs}       % professional-quality tables
\usepackage{amsfonts}       % blackboard math symbols
\usepackage{nicefrac}       % compact symbols for 1/2, etc.
\usepackage{microtype}      % microtypography
\usepackage[utf8]{inputenc} % allow utf-8 input
\usepackage[T1]{fontenc}    % use 8-bit T1 fonts
\usepackage{hyperref}       % hyperlinks
\usepackage{url}            % simple URL typesetting
\usepackage{amsmath}
\usepackage{booktabs}       % professional-quality tables
\usepackage{amsfonts}       % blackboard math symbols
\usepackage{nicefrac}       % compact symbols for 1/2, etc.
\usepackage{microtype}      % microtypography
\usepackage{graphicx}
\usepackage{booktabs}
\usepackage{multirow}
\usepackage{amsthm}
\usepackage{siunitx} % For better number formatting
\usepackage{arydshln} % For dashed lines
\usepackage{subfigure}
\usepackage[table]{xcolor}
\definecolor{lightblue}{RGB}{220,230,241}  % light blue

\newtheorem{proposition}{Proposition}
\begin{document}

% If your paper is accepted and the title of your paper is very long,
% the style will print as headings an error message. Use the following
% command to supply a shorter title of your paper so that it can be
% used as headings.
%
%\runningtitle{I use this title instead because the last one was very long}

% If your paper is accepted and the number of authors is large, the
% style will print as headings an error message. Use the following
% command to supply a shorter version of the author names so that
% they can be used as headings (for example, use only the surnames)
%
%\runningauthor{Surname 1, Surname 2, Surname 3, ...., Surname n}

\twocolumn[

% \aistatstitle{Rethinking cross entropy for continual fine-tuning: policy gradient with entropy annealing}
\aistatstitle{
Policy Gradient with Adaptive Entropy Annealing for Continual Fine-Tuning}

% \aistatsauthor{ Author 1 \And Author 2 \And  Author 3 }

% \aistatsaddress{ Institution 1 \And  Institution 2 \And Institution 3 } ]
\aistatsauthor{Yaqian Zhang$^{1}$ \And Bernhard Pfahringer$^{1}$ \And Eibe Frank$^{1}$ \And Albert Bifet$^{1,2}$}
\aistatsaddress{1. AI Institute, University of Waikato \And 2. LTCI, Télécom Paris}
]
\begin{abstract}
{Despite their success, large pretrained vision models remain vulnerable to catastrophic forgetting when adapted to new tasks in class-incremental settings. Parameter-efficient fine-tuning (PEFT) alleviates this by restricting trainable parameters, yet most approaches still rely on cross-entropy (CE) loss—a surrogate for the 0–1 loss—to learn from new data. We revisit this choice and revive the true objective (0-1 loss) through a reinforcement learning perspective. By formulating classification as a one-step Markov Decision Process, we derive an Expected Policy Gradient (EPG) method that directly minimizes misclassification error with a low-variance gradient estimation. Our analysis shows that CE can be interpreted as EPG with an additional sample-weighting mechanism: CE encourages exploration by emphasizing low-confidence samples, while EPG prioritizes high-confidence ones. Building on this insight, we propose adaptive entropy annealing (aEPG), a training strategy that transitions from exploratory (CE-like) to exploitative (EPG-like) learning. aEPG-based methods outperform CE-based methods across diverse benchmarks and with various PEFT modules. More broadly, we evaluate various entropy regularization methods and demonstrate that lower entropy of the output prediction distribution enhances adaptation in pretrained vision models. 
The source code is provided in the supplementary materials.}

\end{abstract}
\section{Introduction}

{ Modern vision models remain vulnerable to catastrophic forgetting when trained on non-stationary data. Traditional continual learning (CL) methods mitigate this challenge through techniques like memory replay~\citep{delange2021continual,zhang2022simple,van2020brain,tiwari2022gcr}, regularization~\citep{buzzega2020dark,rebuffi2017icarl,kirkpatrick2017overcoming}, or parameter isolation~\citep{xu2018reinforced,mallya2018packnet,hung2019compacting}. With the advent of large pretrained vision transformers, parameter-efficient fine-tuning (PEFT) has emerged as an effective way to substantially reduce forgetting. By restricting the number of trainable parameters, PEFT methods achieve state-of-the-art CL performance without relying on replay~\citep{wang2022learning,wang2022dualprompt,smith2023coda}.  

  Recent studies extend PEFT to CL using prompts~\citep{wang2022dualprompt,wang2022learning}, LoRA~\citep{liang2024inflora,liu2025lora}, and adapters~\citep{gao2024beyond,yu2024boosting,wang2025self,ermis2022memory}. On the stability side (i.e., retaining knowledge from past tasks), many techniques such as EMA-based fast–slow learning and orthogonal subspace constraints with LoRA have been proposed~\citep{gao2023unified,liang2024inflora}. However, on the plasticity side (i.e., acquiring new knowledge), these methods almost universally rely on cross-entropy loss as the default objective for learning from new data.  In this work, we ask whether CE is truly desirable for continual learning. We investigate this question from the perspective of the true classification objective (0–1 loss) and through the lens of entropy.

We begin by reformulating continual learning through a reinforcement learning (RL) lens. The ultimate goal of classification is to minimize the misclassification error (0-1 loss). However, since this loss is non-differentiable,  CE loss has become the de facto surrogate for training deep models, even though the trained models are eventually evaluated on 0-1 loss. We instead cast classification as a one-step Markov Decision Process (MDP), which yields an RL objective equivalent to minimizing misclassification error.  Based on this formulation, we introduce \emph{Expected Policy Gradient} (EPG), a low-variance variant of REINFORCE~\citep{williams1992simple}, to directly optimize 0–1 loss.  

To reveal CE’s limitations, we conduct a comparative analysis of CE and EPG in terms of gradient behavior and entropy dynamics. In parameter space, both share the same gradient direction per sample, but CE implicitly reweights samples—prioritizing harder, high-surprisal cases. This encourages adaptation but risks destabilizing prior knowledge, as the model must correct large prediction errors for those hard samples. By contrast, EPG is more \textit{exploitative}, emphasizing easier samples that already align with the model’s predictions. In the action space, EPG consistently produces lower-entropy output distributions than CE. We hypothesize that CE’s excessive exploration can cause significant deviation from pretrained weights, thereby exacerbating forgetting. This insight highlights the need to carefully balance exploration and exploitation in continual fine-tuning.  

To this end, we propose an \emph{adaptive entropy annealing strategy} (aEPG), which interpolates between CE (exploration) and EPG (exploitation) through a time-dependent weighting scheme. aEPG begins with CE to encourage plasticity and gradually shifts toward EPG to preserve stability. Empirically, aEPG consistently improves performance across four CL benchmarks and multiple PEFT architectures (LoRA, Adapter, and Prefix) (see Table~\ref{tab:sota} and ~\ref{tab:lae}).

Finally, we broaden our study to entropy regularization in continual fine-tuning. While prior work advocates high-entropy training (e.g., label smoothing, focal loss, confidence penalties) for classification, we observe that such approaches harm class-incremental learning with pretrained models. In contrast, low-entropy objectives consistently improve continual adaptation (Table~\ref{tab:focal_all}),   
suggesting that exploitative learning is more effective than aggressive exploration for continual learning with foundation models.

Our contributions are summarized as follows:   
\begin{itemize}
\item We introduce EPG, a policy gradient method that directly optimizes 0–1 loss instead of surrogate objectives like CE.
\item We provide theoretical and empirical evidence contrasting CE’s exploration bias with EPG’s exploitative bias in parameter space and action space.
\item We propose an adaptive entropy annealing strategy (aEPG) that balances exploration and exploitation, improving class-incremental learning.  
\item
We conduct, to our knowledge, the first systematic investigation into entropy regularization for continual learning with PEFT,  uncovering the critical role of entropy reduction.
\end{itemize}
}

\section{Related Work}
\textbf{Continual Learning and Parameter-Efficient Fine-tuning
(PEFT)} Traditional continual learning (CL) methods address forgetting in the train-from-scratch setting through strategies such as penalizing changes to important parameters via regularization losses~\citep{li2017learning,kirkpatrick2017overcoming}, constraining optimization with orthogonal gradients~\citep{farajtabar2020orthogonal, chaudhry2020continual,guo2022adaptive}, replaying past samples~\citep{buzzega2020dark,zhang2022simple}, or isolating parameters for different tasks~\citep{mallya2018packnet,lee2023look,wang2023rehearsal}.  
With the advent of large pretrained transformers, PEFT techniques have achieved state-of-the-art CL performance by restricting the number of trainable parameters. Early advances in continual PEFT, such as L2P, DualPrompt, and CodaPrompt~\citep{wang2022learning,wang2022dualprompt,smith2023coda}, introduced learnable prompt parameters maintained in memory. These methods optimize prompts to guide model predictions while explicitly managing task-invariant and task-specific knowledge.  More recent work has proposed unified frameworks, which integrate adapters~\citep{sung2022vl}, LoRA~\citep{hu2022lora}, and prefix tuning~\citep{DBLP:conf/iclr/LeNNTLH25}, as well as ensemble-based methods like LAE~\citep{gao2023unified} that combine fast and slow learning PEFT experts, and specialized LoRA initialization techniques like InfLoRA~\citep{liang2024inflora}, which design orthogonal subspace constraints
with LoRA to mitigate task interference. 

 \textbf{RL for Pretrained Models}. RL has become a key approach for aligning large pretrained models with human preferences~\citep{casperopen}. Typically, this requires carefully designed reward signals. Most existing work focuses on language models, where external human preference data is treated as the reward signal, and the model is optimized as a policy using policy-gradient methods such as PPO~\citep{ouyang2022training}. Very few studies explore vision models. To the best of our knowledge, only one work applies RL to vision models and is tailored to segmentation and object detection~\citep{pinto2023tuning} with specially designed rewards. In this work, we take a step back and revisit the RL framework for the classification problem to directly minimize 0-1 loss.

\textbf{RL for Continual Learning}
Reinforcement learning has also been applied to improve continual learning performance, mostly for hyperparameter optimization.  ~\cite{xu2018reinforced} employs RL to dynamically select optimal neural architectures for incoming tasks, while ~\cite{zhang2022simple} and ~\cite{liu2023online} introduce a multi-armed bandit framework to adjust hyperparameters like learning rate, training iterations, etc. In contrast, our work uses RL to directly optimize model parameters instead of hyperparameters. %We aim to address this gap in this work. 

\textbf{Entropy Regularization.} 1) \textit{Increasing entropy:} In reinforcement learning, entropy regularization encourages exploration and prevents premature convergence to suboptimal policies through approaches such as intrinsic rewards with entropy bonuses, Boltzmann exploration, and parameter noise injection~\citep{chung2024parseval,haarnoja2018soft,han2023entropy}. In supervised learning, it mitigates overconfident predictions by promoting high-entropy output distributions, improving calibration. A common technique is the confidence penalty~\citep{DBLP:conf/iclr/PereyraTCKH17}, which subtracts a weighted entropy term from the loss to produce more balanced predictions. Subsequent work~\citep{meister2020generalized} unifies label smoothing and confidence penalties, comparing their effectiveness in language generalization tasks, while \citep{mukhoti2020calibrating} shows that focal loss implicitly increases entropy, also enhancing model calibration.  2) \textit{Decreasing entropy:} Entropy minimization is commonly used to deal with unlabeled data. In semi-supervised learning, entropy minimization~\citep{grandvalet2004semi} encourages classifiers to make confident predictions on unlabeled samples, thereby shaping decision boundaries to pass through low-density regions of the input space. More recently, entropy minimization has been leveraged in test-time adaptation~\cite{DBLP:conf/iclr/WangSLOD21}, where models adapt to distribution shifts by enforcing confident, low-entropy predictions on new domains without labels. To our knowledge, we present the first systematic investigation of entropy regularization in continual learning with pretrained models.

\textbf{Direct Minimization of 0-1 Loss}. Direct optimization of 0-1 loss has been studied in supervised classification via approximations and alternative formulations. For example, ~\cite {hasan2019new} proposes a smooth approximation using the posterior mean of a generalized Beta-Bernoulli distribution, and \cite{karpukhin2024exact}  models predictions as a multivariate normal distribution and solves it via orthant integration of the probability density function. 
To the best of our knowledge, our work is the first to approach 0-1 loss optimization from a reinforcement learning perspective.

\textbf{Bandit Multiclass Classification}. Our work differs from multi-classification with bandit information~\citep{kakade2008efficient}. In the bandit feedback setup, the learner does not observe the true label for a given input but only receives binary feedback indicating whether its predicted label is correct.  Notably, this is typically studied in an online setting, and the main objective is to minimize the regret and to achieve sublinear regret~\citep{erez2024real,erez2024fast,crammer2013multiclass}. Our work follows the standard supervised learning with the true labels available.

\section{Methodology}

\subsection{Classification MDP Formulation}
\label{sec:classification_mdp}
We formulate classification and continual fine-tuning as a one-step MDP: the input samples $x \sim d(x)$ form the state space with state distribution $d(x)$ and classification labels constitute the action space $\mathcal{A}$, with a reward function $r\sim\mathcal{R}_{x,a}$ indicating whether an action (predicted label) matches the ground-truth label for $x$, or not. Episodes terminate after one step. The policy $\pi_\theta(a|x)$ is parameterized by a deep neural network. The objective is to maximize expected reward over the policy: 
\begin{equation}
\label{eq:rl_objective}
\begin{aligned}
    J_\pi(\theta)& = \mathbb{E}_{x\sim d(x),a\sim \pi_\theta(a|x)}[r] \\
    &= \sum_{x \in \mathcal{X}} d(x) \sum_{a \in \mathcal{A}} \pi_\theta(a|x) \mathcal{R}_{x,a}.
\end{aligned}
\end{equation}
More specifically, we define a deterministic reward function based on ground-truth labels:

\begin{equation}
\label{eq_reward_def}
\mathcal{R}_{x,a} = 
\begin{cases} 
1, & \text{if } a = y \\
0, & \text{otherwise}
\end{cases}
\end{equation}

This reward scheme assigns a value of one for correct classifications and zero otherwise, directly aligning the reinforcement learning objective with the goal of maximizing classification accuracy.

We consider supervised classification, where ground truth labels are available during training. Note that the reward model is \textit{fully observable to the learning agent} when the class label is given. This makes the proposed classification MDP differ from the problem setting of classification under bandit feedback~\cite{kakade2008efficient}.
%The agent receives immediate feedback about prediction correctness, directly aligning the reinforcement learning goal with standard classification objectives. 

\textbf{Connection between RL Objective and 0-1 Loss}. %We establish the relationship between the reinforcement learning objective and the 0-1 classification loss. 
For a classifier $h_\theta$ with true labels $y$ and predictions $h_\theta(x)$, the \textit{0-1 loss} is defined as:
\begin{equation}
    \mathcal{L}_{01}(y, h_\theta(x)) =
    \begin{cases}
        0, & \text{if } h_\theta(x) = y \text{ (correct class)} \\
        1, & \text{if } h_\theta(x) \neq y \text{ (incorrect class)}
    \end{cases}
\end{equation}

Building upon the RL objective in Eq.~\ref{eq:rl_objective} and the reward function in Eq.~\ref{eq_reward_def}, we derive the following connection:

\begin{proposition}
\label{proposition_01}
Minimizing the 0-1 loss of classifier $h_\theta$ is equivalent to maximizing the RL objective:
\begin{equation}
    \min_\theta \mathcal{L}_{01}(h_\theta) = \max_\theta J_h(\theta)
\end{equation}

\end{proposition}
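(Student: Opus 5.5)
We view the classifier $h_\theta$ as a policy and write the expected reward in closed form. The natural reading of $J_h(\theta)$ is the RL objective of Eq.~\ref{eq:rl_objective} evaluated at the policy induced by $h_\theta$. Taking this deterministic policy $\pi_\theta(a|x)=\mathbb{1}[a=h_\theta(x)]$, the inner sum over actions collapses to a single term:
\begin{equation*}
J_h(\theta)=\sum_{x\in\mathcal{X}} d(x)\,\mathcal{R}_{x,h_\theta(x)}=\mathbb{E}_{x\sim d}\big[\mathbb{1}[h_\theta(x)=y]\big],
\end{equation*}
where the second equality uses the reward of Eq.~\ref{eq_reward_def}. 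Here $y=y(x)$ is the ground-truth label, or the expectation is over the joint distribution of $(x,y)$ if labels are random.

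Next we compare this with the expected 0-1 loss $\mathcal{L}_{01}(h_\theta)=\mathbb{E}_{x\sim d}[\mathcal{L}_{01}(y,h_\theta(x))]$. Checking the two cases in the definitions gives the pointwise identity $\mathcal{L}_{01}(y,h_\theta(x))=1-\mathcal{R}_{x,h_\theta(x)}$. By linearity of expectation, $\mathcal{L}_{01}(h_\theta)=1-J_h(\theta)$ for every $\theta$.

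Since $t\mapsto 1-t$ is strictly decreasing, the two problems have the same set of optimal parameters, $\arg\min_\theta \mathcal{L}_{01}(h_\theta)=\arg\max_\theta J_h(\theta)$. This is the sense in which the stated equality should be read. The optimal values themselves satisfy $\min_\theta\mathcal{L}_{01}=1-\max_\theta J_h$; they are not literally equal.

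For a stochastic policy $\pi_\theta$ we would add a short remark. The same computation gives $J_\pi(\theta)=\mathbb{E}_x[\pi_\theta(y|x)]$, which is one minus the expected 0-1 loss of the randomized classifier $a\sim\pi_\theta(\cdot|x)$. The equivalence therefore carries over in expectation over the sampled prediction. In addition, the supremum over policies is attained by deterministic ones that put all mass on $y$.

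The main obstacle is interpretive rather than technical. We must fix what $J_h$ means for a deterministic $h_\theta$, and state that "$=$" is meant as equivalence of the optimization problems. Once the indicator identity is written down, the rest is immediate.
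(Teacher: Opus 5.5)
Your proof is correct and follows the paper's argument: the paper also writes the reward as $1-(1-\mathcal{R}_{x,a})$ (a ``constant baseline of value 1''), obtains $J_h(\theta)=1-\mathcal{L}_{01}(h_\theta)$, and notes that a constant offset leaves the optimizers unchanged. Your remark that the stated ``$=$'' means equality of optimizer sets, with $\min_\theta \mathcal{L}_{01}(h_\theta) = 1-\max_\theta J_h(\theta)$ for the values, usefully makes explicit what the paper leaves informal.
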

\begin{proof}
    See Appendix~\ref{sec:appendix_proof_01}
\end{proof}
% \begin{proof}
% By interpreting $h_\theta(x)$ as the policy $\pi_\theta(a|y)$ in Eq~\ref{eq:rl_objective} and applying a constant baseline of value 1 to the reward function $\mathcal{R}_{x,a}$ (Eq.~\ref{eq_reward_def}), we obtain:
% \begin{equation}
% \begin{aligned}
%        J_h(\theta) &=\mathbb{E}_{x\sim d(x),a\sim h_\theta}[r]\\
%       & = 1-\sum_{x \in \mathcal{X}} d(x) \sum_{a \in \mathcal{A}} h_\theta(a|x) (-\mathcal{R}_{x,a}+1)\\
% &= 1 - \mathcal{L}_{01}(h_\theta) .
% \end{aligned}
% \end{equation}
% The constant offset does not affect the optimization objective, thus establishing the equivalence.
% \end{proof}

%The 0-1 loss presents fundamental challenges for gradient-based optimization due to its discontinuous and non-differentiable nature. We address this limitation through a reinforcement learning perspective that reformulates classification as policy optimization.  
Proposition 1 demonstrates that 0-1 loss minimization can be addressed from an RL  perspective that reformulates classification as policy optimization.

While conventional classification approaches typically implement a deterministic mapping $h_\theta: \mathcal{X} \rightarrow \mathcal{Y}$ (which could alternatively be viewed as a deterministic policy in the proposed framework and optimized via deterministic policy gradient methods~\cite{silver2014deterministic}), this paper instead explores a stochastic policy with softmax parameterization: $\pi_\theta(a|x)=e^{f_\theta(a|x)}/\sum_k e^{f_\theta(k|x)}$, where $f_\theta: \mathcal{X} \rightarrow \mathbb{R}^K$ denotes the model's logit outputs. This parameterization not only maintains the familiar structure of softmax-based classification but also establishes a principled connection between policy gradient optimization and cross-entropy minimization. Through this formulation, we can directly investigate how policy gradient methods relate to traditional classification objectives (CE) while handling the non-differentiable 0-1 loss, as shown in the next section.
%In the following sections, we investigate how policy gradient methods can implicitly optimize the 0-1 loss in section~\ref{sec:epg} and analyze this relationship with cross entropy loss~\ref{sec:ce_connection}.

\subsection{Expected Policy Gradient}
\label{sec:epg}

We solve the RL problem described above using policy gradient methods. While traditional approaches such as REINFORCE~\citep{williams1992simple} and PPO~\citep{schulman2017proximal} rely on stochastic action sampling, we derive a more efficient gradient estimator by exploiting the inherent structure of the classification MDP.

Based on the policy gradient theorem, the gradient of Eq~\ref{eq:rl_objective} can be computed using the likelihood ratio gradient estimator~\citep{sutton1999policy}. For one-step MDPs with immediate rewards, we have:

\begin{equation}
    \nabla_\theta J(\theta) = \mathbb{E}_{x\sim d(x), a\sim \pi_\theta(a|x)} \left[
    \mathcal{R}_{x,a} \nabla_\theta \log \pi_\theta(a|x) \right]
\end{equation}

The REINFORCE policy gradient algorithm~\cite{williams1992simple} approximates this expectation through Monte Carlo sampling. Given the sampled trajectories $\{x_i, a_i, r_i\}_{N}$, the gradient can be estimated as:

\begin{equation}
    \hat{g}_{\text{REINFORCE}} = \frac{1}{N}\sum_{x_i\sim d(x), a_i\sim \pi_\theta} \mathcal{R}_{x_i,a_i} \nabla_\theta \log \pi_\theta(a_i|x_i)
\end{equation}

This type of sampling-based policy gradient method, as employed by REINFORCE and Proximal Policy Optimization (PPO)~\citep{schulman2017proximal}, is widely used in deep reinforcement learning tasks and for fine-tuning large language models with human feedback. However, we observe that the sampling-based approach does not exploit the simplicity of classification tasks.
Crucially, in our classification MDP formulation, the reward function is available to the learner, since the reward $\mathcal{R}_{x,a}$ for all actions is available once the class label for a sample is given (see Eq.~\ref{eq_reward_def}).
This allows us to compute the expectation over actions exactly in the gradient estimator while only sampling from the state distribution $d(x)$:
\begin{equation}
\label{eq:grad_epg}
\begin{aligned}
    \hat{g}_{\text{EPG}} 
    % &= \sum_{x_i\sim d(x)} \mathbb{E}_{a\sim \pi_\theta(a|x_i)} \left[ \mathcal{R}_{x_i,a} \nabla_\theta \log \pi_\theta(a|x_i) \right] \\
    &= \frac{1}{N}\sum_{x_i\sim d(x)} \sum_{a \in \mathcal{A}} \pi_\theta(a|x_i) \mathcal{R}_{x_i,a} \nabla_\theta \log \pi_\theta(a|x_i)
\end{aligned}
\end{equation}
We term this the \textit{Expected Policy Gradient} (EPG) to distinguish it from methods that need to sample actions.  As EPG uses the exact expectation, it can eliminate the noise caused by action sampling. In other words, it maintains the true gradient's expectation while reducing variance in gradient estimate, i.e., $\text{Var}[\hat{g}_{\text{EPG}}] \leq \text{Var}[\hat{g}_{\text{REINFORCE}}]$, and $\mathbb{E}[\hat{g}_{\text{EPG}}] = \mathbb{E}[\hat{g}_{\text{REINFORCE}}]$.
%{\color{blue}The variance reduction property follows directly from the law of total variance, as $\mathbb{V}[\hat{g}_{\text{EPG}}] \leq \mathbb{V}[\hat{g}_{\text{REINFORCE}}]$, while $\mathbb{E}[\hat{g}_{\text{EPG}}] = \mathbb{E}[\hat{g}_{\text{REINFORCE}}]$.}
% From the perspective of reinforcement learning, we can see that cross entropy loss leads to a bootstrapped version of policy gradient, where the reward is bootstrapped in the following way. Hard samples with low $\pi_\theta(y|x)$ will receive higher reward compared to easier samples.

% \begin{equation}
% \label{eq_reward_ce}
% \mathcal{R}^{\text{CE}}_{x, a}  =\frac{1}{\pi_\theta(a|x)}\mathcal{R}^{\text{EPG}}_{x, a}
% \end{equation}

\subsection{Comparative Analysis of CE and EPG}
\begin{figure}[t]
    \centering
    \subfigure[Split-Food101-10]{
    \includegraphics[width=0.7\linewidth]{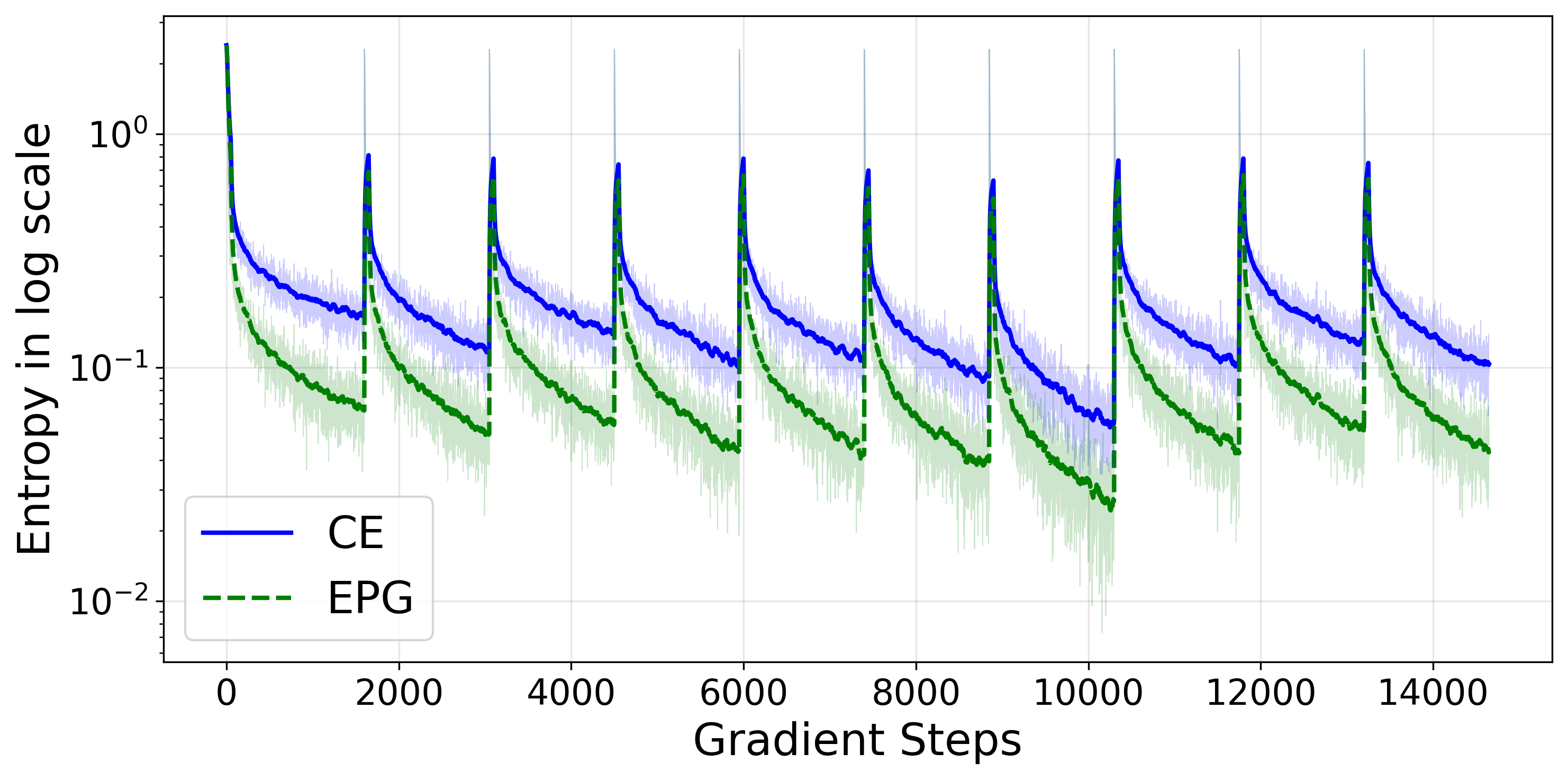}}
    \subfigure[Split-ImageNetR200-10]{
     \includegraphics[width=0.7\linewidth]{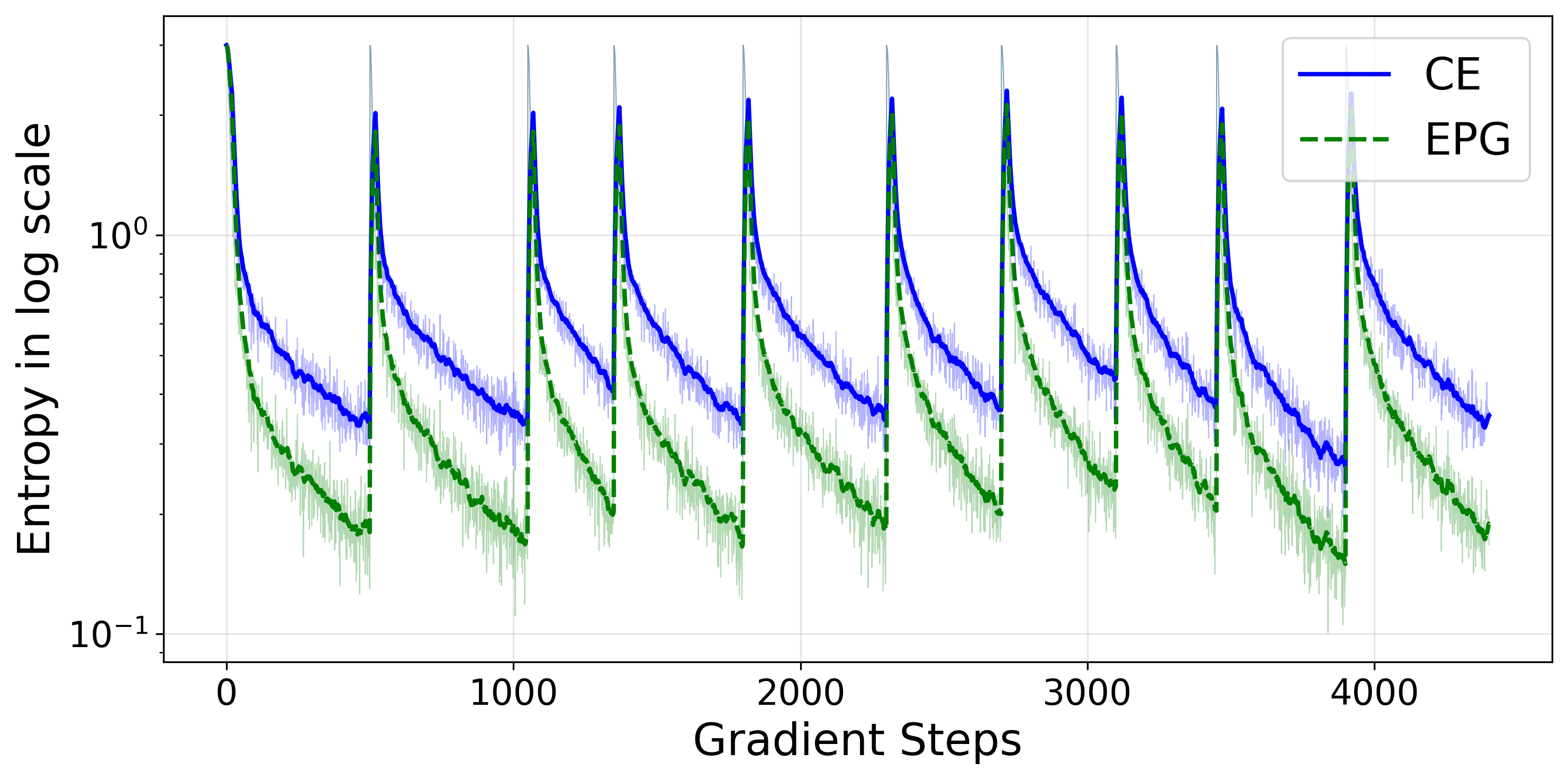}}
    \caption{The entropy of the output distribution of the model: EPG leads to higher entropy than CE. }
    \label{fig:epg_entropy}
\end{figure}
\label{sec:ce_connection}

\label{sec:over_exploration}
{

\textbf{Gradient Analysis: Sample Weighting}.  We first study the connection between EPG and CE optimization by examining their gradient relationship. Given the target distribution $q(a|x)$ and the softmax output $\pi_\theta(a|x)$, the gradient of CE (Eq.~\ref{eq:grad_ce}) is:

\begin{equation}
\label{eq:grad_ce}
\hat{g}_{\textbf{CE}} = -\frac{1}{N}\sum_{x\sim d(x)}\sum_a q(a|x) \nabla_\theta \log\pi_\theta(a|x).
\end{equation}

Note that both EPG and CE gradients involve $\nabla_\theta \log \pi_\theta(a|x)$. For one-hot labels (i.e., when $q(a|x)$ follows Dirac delta distribution), the gradient for a sample $(x_i, y_i)$ simplifies to $\hat{g}^i_{\text{CE}}(x_i,y_i) = -\nabla_\theta \log\pi_\theta(y_i|x_i)$. Comparing this with Eq.~\ref{eq:grad_epg}, we derive:

% \begin{equation}
% \label{eq:grad_relation}
% \begin{aligned}
% \hat{g}^i_{\text{EPG}}(x_i,y_i) &= \pi_\theta(y_i|x_i) \nabla_\theta \log\pi_\theta(y_i|x_i) \\
% &= -\pi_\theta(y_i|x_i) \hat{g}^i_{\text{CE}}(x_i,y_i).    
% \end{aligned}
% \end{equation}

\begin{equation}
\label{eq:grad_relation}
\begin{aligned}
\hat{g}^i_{\text{CE}}(x_i,y_i) &= -\frac{1}{\pi_\theta(y_i|x_i)}  \hat{g}^i_{\text{EPG}}(x_i,y_i).    
\end{aligned}
\end{equation}

This reveals that EPG and CE produce gradients in the \textit{same direction} but with \textit{different sample weightings}. Specifically, CE assigns larger weights to uncertain predictions (where $\pi_\theta(y_i|x_i)$ is close to 0), whereas EPG emphasizes confident predictions (where $\pi_\theta(y_i|x_i)$ is close to 1). 
%Confidence-based weighting has also been explored in prior work such as focal loss~\citep{lin2017focal}. However, focal loss incorporates weighting at the \textit{loss level}. In contrast, EPG applies confidence-based weighting \textit{directly in the gradient}, which more explicitly governs parameter updates. 

The CE loss is known to be vulnerable to label noise. Here, we offer a conceptual explanation by comparing its gradient behavior with 0-1 loss optimization. 
Equation~\ref{eq:grad_relation} shows that CE disproportionately weights hard samples with larger gradient magnitudes. Although this focus can accelerate learning, it increases susceptibility to overfitting on noisy or mislabeled examples. In contrast, EPG—which directly optimizes the 0-1 loss—assigns higher weights to confident predictions during gradient updates, promoting the learning of general patterns and consequently enhancing robustness.

\begin{table*}[htbp]
%\label{tab:sota}
\centering
\caption{The performance of continual PEFT methods on Split-ImageNet-R (averaged over 5 runs).
}
\label{tab:sota}
 \vspace{4pt}
\resizebox{0.9\textwidth}{!}{
\begin{tabular}{lcccccccc}
\toprule
Tasks &  \multicolumn{2}{c}{\textbf{5 Tasks}} & \multicolumn{2}{c}{\textbf{10 Tasks}} & \multicolumn{2}{c}{\textbf{20 Tasks}} \\
% \cmidrule(lr){4-5} \cmidrule(lr){6-7} \cmidrule(lr){8-9}
\cmidrule(lr){2-3} \cmidrule(lr){4-5} \cmidrule(lr){6-7}
Methods  & $A_{5} $ & $\tilde{A}_{5}$   & $A_{10} $ & $\tilde{A}_{10}$ & $A_{20} $ & $\tilde{A}_{20}$ \\
\midrule
% LwF  &76.3 $\pm$	0.2	& 80.4 $\pm$ 0.2	& 75.0  $\pm$	0.2	& 79.7 $\pm$ 0.2 & 73.3  $\pm$ 0.3	& 78.6 $\pm$ 0.1
% \\
L2P  & 71.1 $\pm$	0.4	& 77.0	$\pm$ 0.7 &	69.3 $\pm$	0.3 & 76.8 $\pm$	0.7	 & 65.6 $\pm$ 0.3 &	74.2 $\pm$	1.0
 \\
DualPrompt  & 72.9 $\pm$ 0.3 & 76.0 $\pm$ 0.3 & 71.2 $\pm$ 0.1 & 75.4 $\pm$ 0.1 & 71.2 $\pm$ 0.1 & 74.8 $\pm$ 0.1 \\
Coda-Prompt & 76.5 $\pm$	0.5 &	81.8 $\pm$	0.7 &	75.0 $\pm$	0.4 &	81.6 $\pm$	0.7 &	71.5 $\pm$	0.4 &	79.0 $\pm$	1.0
  \\
InfLoRA  & 76.8 $\pm$ 0.4 & 80.8 $\pm$ 0.3 & 74.2 $\pm$ 0.1 & 79.5 $\pm$ 0.2 & 68.6 $\pm$ 0.5	& 74.8 $\pm$	0.4
 \\ \midrule
LoRA & 74.8 $\pm$ 0.0 & 79.8 $\pm$ 0.1 & 74.3 $\pm$ 0.1 & 79.2 $\pm$ 0.3 & 73.2 $\pm$ 0.1 & 78.7 $\pm$ 0.0 \\
LoRA + aEPG  & 77.2 $\pm$ 0.0 & {81.9 $\pm$ 0.0} & {75.8 $\pm$ 0.3} & {80.9 $\pm$ 0.0 }& {74.1 $\pm$ 0.2} &{79.7 $\pm$ 0.2} \\ \midrule
LwfLoRA  &76.3 $\pm$	0.2	& 80.4 $\pm$ 0.2	& 75.0  $\pm$	0.2	& 79.7 $\pm$ 0.2 & 73.3  $\pm$ 0.3	& 78.6 $\pm$ 0.1
\\ 
LwfLoRA + aEPG &  78.1 $\pm$ 0.2 &	81.8  $\pm$	0.1 & 76.2 $\pm$	0.2	& 80.8 $\pm$	0.2 & 74.3 $\pm$	0.2	& 79.7 $\pm$	0.1 \\ \midrule
LAE &  76.1 $\pm$ 0.2 & 80.6 $\pm$ 0.1 & 75.4 $\pm$ 0.0 & 79.9 $\pm$ 0.3 & 73.9 $\pm$ 0.3 & 79.2 $\pm$ 0.1 \\
LAE  + aEPG    & \textbf{78.3 $\pm$ 0.1} & \textbf{82.3 $\pm$ 0.0} & \textbf{76.7 $\pm$ 0.3} & \textbf{81.4 $\pm$ 0.2} & \textbf{75.0 $\pm$ 0.3} & \textbf{80.0 $\pm$ 0.1 }\\
\bottomrule
\end{tabular}
}
\end{table*}

\begin{table*}[htbp]
    %\label{tab:lae}
    \centering
    \caption{The effectiveness of aEPG across different datasets and PEFT modules (averaged over 3 runs).}
    \vspace{4pt}
    \resizebox{0.9\textwidth}{!}{
        \begin{tabular}{ll>{\centering\arraybackslash}m{1.5cm}cccccccc}
        \toprule
    \multirow{2}{*}{PEFT} &\multirow{2}{*}{Algo}   & \multicolumn{2}{c}{Split-Food101} & \multicolumn{2}{c}{Split-CUB200} & \multicolumn{2}{c}{CLRS25} \\
            \cmidrule(lr){3-4} \cmidrule(lr){5-6} \cmidrule(lr){7-8} \cmidrule(lr){9-10}
        &    & $A_{10} $ & $\tilde{A}_{10}$ & $A_{10} $ & $\tilde{A}_{10}$ & $A_{5} $ & $\tilde{A}_{5}$ \\
        \midrule
        %\textbf{Lora} \\
\multirow{2}{*}{LoRA} &
LAE  & $83.2 \pm 0.1$ & $88.8 \pm 0.0$ & $82.7 \pm 0.3$ & $85.8 \pm 0.3$ & $73.0 \pm 0.6$ & $82.9 \pm 0.3$ \\
& LAE+aEPG  & $84.7 \pm 0.0$ & $89.7 \pm 0.1$ & $\mathbf{84.1 \pm 0.2}$ & $\mathbf{86.3 \pm 0.2}$ & $\mathbf{74.8 \pm 0.5}$ & $\mathbf{84.5 \pm 0.4}$ \\
\midrule
\multirow{2}{*}{Adapter} &  LAE  & $83.6 \pm 0.2$ & $88.9 \pm 0.1$ & $82.2 \pm 0.1$ & $85.7 \pm 0.2$ & $70.3 \pm 1.7$ & $81.0 \pm 1.0$ \\
& LAE+aEPG  & $\mathbf{85.0 \pm 0.1}$ & $\mathbf{89.7 \pm 0.1}$ & $83.9 \pm 0.2$ & $86.3 \pm 0.1$ & $74.1 \pm 0.5$ & $83.5 \pm 0.3$ \\
\midrule
\multirow{2}{*}{Prefix} & LAE  & $82.9 \pm 0.1$ & $88.7 \pm 0.1$ & $80.7 \pm 0.1$ & $84.5 \pm 0.1$ & $68.8 \pm 0.7$ & $80.0 \pm 0.5$ \\
& LAE+aEPG & $84.4 \pm 0.1$ & $89.4 \pm 0.1$ & $81.0 \pm 0.3$ & $84.5 \pm 0.3$ & $71.9 \pm 0.7$ & $82.0 \pm 0.5$ \\ \bottomrule
    \end{tabular}
    }
    \label{tab:lae}
\end{table*}

\textbf{Objective Analysis: Noise Tolerance}. The robustness of EPG to label noise can also be established theoretically. The RL framework underlying EPG is inherently designed to handle stochastic reward signals, making it naturally tolerant to noise. Proposition~\ref{proposition:label noise} demonstrates that the EPG objective exhibits noise tolerance under symmetric label noise conditions.

\begin{proposition}
\label{proposition:label noise}
In a $K$-class classification problem, EPG is noise-tolerant under symmetric label noise provided that the noise rate $\eta$ satisfies $\eta < 1 - \tfrac{1}{K}$.
\end{proposition}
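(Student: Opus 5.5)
The plan is to use the standard noise-tolerance framework for symmetric losses (Ghosh et al.). The first step is to identify the loss that EPG optimizes. By Proposition~\ref{proposition_01} and Eq.~\ref{eq:rl_objective}, with the reward of Eq.~\ref{eq_reward_def}, the per-sample objective is $J(x,y)=\pi_\theta(y|x)$. EPG therefore minimizes the expected loss $\ell(\pi_\theta(\cdot|x),y) = 1-\pi_\theta(y|x)$, which is the expected 0-1 loss of the stochastic policy. We call a loss noise-tolerant when every global minimizer of the clean risk $R(\theta)=\mathbb{E}_{x,y}[\ell(\pi_\theta(\cdot|x),y)]$ is also a global minimizer of the noisy risk $R^\eta(\theta)=\mathbb{E}_{x,\tilde y}[\ell(\pi_\theta(\cdot|x),\tilde y)]$. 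Here $\tilde y=y$ with probability $1-\eta$, and otherwise $\tilde y$ is uniform over the remaining $K-1$ classes.

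The key structural fact is that this loss is symmetric. Because $\pi_\theta$ is a probability distribution, summing over all labels gives
\begin{equation*}
\sum_{k=1}^K \ell(\pi_\theta(\cdot|x),k) = \sum_{k=1}^K \bigl(1-\pi_\theta(k|x)\bigr) = K-1,
\end{equation*}
which is a constant $C$ independent of $\theta$ and $x$. Using this, I expand the noisy risk by conditioning on $x$ and $y$:
\begin{equation*}
R^\eta(\theta) = (1-\eta) R(\theta) + \frac{\eta}{K-1}\,\mathbb{E}_{x,y}\Bigl[\sum_{k\neq y}\ell(\pi_\theta(\cdot|x),k)\Bigr].
\end{equation*}
The inner sum equals $C - \ell(\pi_\theta(\cdot|x),y)$, so
\begin{equation*}
R^\eta(\theta) = \frac{\eta C}{K-1} + \Bigl(1-\eta-\frac{\eta}{K-1}\Bigr)R(\theta) = \eta + \Bigl(1-\frac{\eta K}{K-1}\Bigr)R(\theta).
\end{equation*}

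The last step is to read off the condition. The noisy risk is an affine function of the clean risk. Its slope $1-\frac{\eta K}{K-1}$ is strictly positive exactly when $\eta<\frac{K-1}{K}=1-\frac1K$. In that regime $R^\eta$ is a strictly increasing function of $R$, so the two risks have the same set of minimizers, and EPG is noise-tolerant. For boundary commentary, note that at $\eta=1-\frac1K$ the noisy labels are independent of $x$ and the noisy risk is constant. Above that threshold the ordering of solutions reverses.

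There is no real technical obstacle; the only delicate point is the framing. I must define noise tolerance as risk-minimizer equivalence and state that the loss EPG descends is $1-\pi_\theta(y|x)$. The link is the gradient identity: the EPG gradient in Eq.~\ref{eq:grad_epg} equals $\nabla_\theta \pi_\theta(y|x)$ because $\pi\nabla\log\pi=\nabla\pi$. This establishes that EPG performs gradient ascent on exactly the symmetric objective used above.
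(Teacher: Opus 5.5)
Your proposal is correct and follows essentially the same route as the paper. The paper also writes the noisy EPG objective as $J^\eta(\theta) = \bigl(1-\tfrac{K}{K-1}\eta\bigr)J(\theta) + \tfrac{\eta}{K-1}$, using $\sum_{a\neq y}\pi_\theta(a|x) = 1-\pi_\theta(y|x)$ (your symmetry identity in reward rather than loss form), and then requires the slope to be positive; your framing additionally gives equality of the minimizer sets in both directions and makes explicit, via $\pi_\theta\nabla_\theta\log\pi_\theta = \nabla_\theta\pi_\theta$, that EPG ascends exactly this objective.
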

\begin{proof}
See Appendix~\ref{sec:appendix_proof_label_noise}.
\end{proof}

\textbf{Entropy Analysis: Exploration vs. Exploitation}. Our previous analysis examines how the sample weighting scheme influences gradient optimization in parameter space. To understand how these parameter changes translate to the action space (i.e., the output of the deep neural network), we analyze the evolution of output distribution entropy during continual fine-tuning.
 As shown in Figure~\ref{fig:epg_entropy}, when learning each new task, the model's predictions are initially nearly random, resulting in high entropy. As training progresses, this entropy gradually decreases. Perhaps surprisingly, we observe that EPG reduces entropy significantly faster than CE and achieves lower final entropy levels (Fig.~\ref{fig:epg_entropy}), despite EPG having smaller gradient magnitudes than CE ($|\hat{g}^i_{\text{EPG}}(x_i,y_i)|=\pi_\theta(y_i|x_i) |\hat{g}^i_{\text{CE}}(x_i,y_i)|\leq |\hat{g}^i_{\text{CE}}(x_i,y_i)|$).

Beyond empirical observations of entropy dynamics, Proposition~\ref{proposition:KL} offers theoretical insight under symmetric label noise, showing that the RL objective underlying EPG inherently reduces entropy while also minimizing the KL divergence between the predicted and target distributions.
\begin{proposition}  
\label{proposition:KL}
For $K$-class classification with symmetric label noise, the EPG objective
satisfies
\[
\max_\theta J^{RL}_{p_\theta}(\theta) \equiv \min_\theta \left[ D_\text{KL}(p_\theta \parallel q_\eta) + H(p_\theta) \right],
\]
where $p_\theta$ is the model's predictive distribution and $q_n$ is the target distribution.
\end{proposition}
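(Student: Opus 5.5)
Under symmetric label noise, the observed label $\tilde y$ equals the true label $y$ with probability $1-\eta$ and each of the other $K-1$ classes with probability $\eta/(K-1)$. The target distribution for a given $x$ is therefore
\[
q_\eta(a|x)=(1-\eta)\mathbb{1}[a=y]+\tfrac{\eta}{K-1}\mathbb{1}[a\neq y].
\]
Write $p_\theta(\cdot|x)=\pi_\theta(\cdot|x)$. With the reward of Eq.~\ref{eq_reward_def} computed on the noisy label, the expected reward under the noise is
\[
\mathbb{E}_{\tilde y}\sum_a p_\theta(a|x)\mathcal{R}_{x,a}=\sum_a p_\theta(a|x)\,q_\eta(a|x).
\]
Hence the RL objective of Eq.~\ref{eq:rl_objective} becomes
\[
J^{RL}_{p_\theta}(\theta)=\sum_x d(x)\sum_a p_\theta(a|x)q_\eta(a|x).
\]
The plan is to rewrite this inner product as a quantity of the form $\mathbb{E}_{p_\theta}[\log q_\eta]$, which decomposes exactly as $-(D_{\mathrm{KL}}(p_\theta\|q_\eta)+H(p_\theta))$. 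This uses the identity
\[
\sum_a p\log q=\sum_a p\log p-\sum_a p\log\tfrac{p}{q}=-H(p)-D_{\mathrm{KL}}(p\|q).
\]

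The key step is to relate $\sum_a p_\theta q_\eta$ to $\sum_a p_\theta\log q_\eta$. Under symmetric noise, $q_\eta$ takes only two values, $c_1=1-\eta$ on $y$ and $c_0=\eta/(K-1)$ elsewhere. Any function $g$ of $q_\eta$ is therefore affine in the indicator $\mathbb{1}[a=y]$:
\[
g(q_\eta(a|x))=g(c_0)+\bigl(g(c_1)-g(c_0)\bigr)\mathbb{1}[a=y].
\]
Applying this with $g=\mathrm{id}$ and $g=\log$ and using $\sum_a p_\theta(a|x)=1$ gives
\[
\sum_a p_\theta q_\eta=c_0+(c_1-c_0)\,p_\theta(y|x),\qquad
\sum_a p_\theta\log q_\eta=\log c_0+\log\tfrac{c_1}{c_0}\,p_\theta(y|x).
\]
Both are increasing affine functions of $p_\theta(y|x)$ as long as $c_1>c_0$. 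That condition is equivalent to $\eta<1-\tfrac1K$, the same condition as in Proposition~\ref{proposition:label noise}, so I will state it as the standing assumption. The two objectives are then positive affine transforms of one another for each $x$ and share maximizers. The same affine coefficients hold for every $x$, since $c_0$ and $c_1$ do not depend on $x$, so the equivalence survives averaging over $d(x)$. The notation $\equiv$ in the statement is read as this "same optimizers" equivalence.

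Combining the two steps gives
\[
\max_\theta J^{RL}_{p_\theta}\equiv\max_\theta\sum_x d(x)\sum_a p_\theta\log q_\eta\equiv\min_\theta\bigl[D_{\mathrm{KL}}(p_\theta\|q_\eta)+H(p_\theta)\bigr],
\]
with the KL and entropy terms understood as averaged over $d(x)$. The main obstacle is the step where $p\,q$ is replaced by $p\log q$. This replacement is not a general identity; it works only because symmetric noise makes $q_\eta$ two-valued, and it needs $c_1>c_0$ for the monotonicity to point the right way. I would state this reliance explicitly. I would also remark that the equivalence is in terms of optimizers, so the values differ by an affine change. Finally, I would note the interpretation: the $+H(p_\theta)$ term shows that the objective penalizes entropy, consistent with the faster entropy decrease seen in Fig.~\ref{fig:epg_entropy}.
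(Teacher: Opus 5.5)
Your proof is correct and follows essentially the paper's argument: the identity $\sum_a p\log q=-H(p)-D_{\mathrm{KL}}(p\,\|\,q)$, plus the observation that a two-valued $q_\eta$ makes $\mathbb{E}_{p_\theta}[\log q_\eta]$ affine in the probability of the label on which $q_\eta$ peaks. The paper centers $q_\eta$ on the observed label and takes $J^{RL}=p_\theta(y^*|x)$ itself, so it needs only the $\log$ identity, but it then genuinely requires your condition $\eta<1-\tfrac1K$ for its slope $A$ to be positive (it only writes $\eta\in(0,1)$); in your noise-averaged formulation that condition is not even needed for the equivalence, since $\log(c_1/c_0)$ and $c_1-c_0$ always have the same sign.
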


% \begin{proposition}  
% \label{proposition:KL}
% For hard-label classification, let $p_\theta$ denote the model's predictive distribution and let $q_\epsilon$ be an $\epsilon$-smoothed target distribution, i.e.,
% \begin{equation}
% q_\epsilon(y_k|x) = 
% \begin{cases}
% 1 - \epsilon & \text{if } y_k = y^*, \\
% \frac{\epsilon}{K-1} & \text{otherwise},
% \end{cases}
% \end{equation}
% where $y^*$ is the correct class and $K$ is the number of classes. Then, in the limit $\epsilon \to 0$, the reinforcement learning objective
% \begin{equation}
% J^{RL}_{p_\theta}(\theta) = \mathbb{E}_{y \sim p_\theta}[\mathcal{R}(y,x)], \quad 
% \mathcal{R}(y,x) = \mathbf{1}_{y = y^*},
% \end{equation}
% satisfies
% \begin{equation}
% \max_\theta J^{RL}_{p_\theta}(\theta) \equiv \lim_{\epsilon \to 0} \min_\theta \left[ D_\text{KL}(p_\theta \parallel q_\epsilon) + H(p_\theta) \right].
% \end{equation}
% \end{proposition}

% \begin{proposition}  
% \label{proposition:KL}
% For hard-label classification, the reinforcement learning objective satisfies:
% \begin{equation}
%     \max_\theta J^{RL}_{p_\theta}(\theta) \equiv \min_\theta \left[ D_\text{KL}(p_\theta \parallel q) + H(p_\theta) \right],
% \end{equation}
% where $p_\theta$ is the model's predictive distribution and $q$ is the target distribution. 
% %This establishes that Expected Policy Gradient optimization simultaneously minimizes the KL divergence between predictions and targets and reduces the entropy of the output distribution

% \end{proposition}

\begin{proof}See Appendix~\ref{sec:appendix_proof_kl}
%[Proof Sketch]The equivalence follows from 1) decomposing the RL objective using the baseline subtraction technique from policy gradient methods; 2) identifying the entropy and divergence terms through algebraic manipulation
\end{proof}

Proposition~\ref{proposition:KL} establishes an connection between the 0-1 loss and CE loss under noisy label setting: while the CE loss explicitly minimizes the difference between the target and predicted distributions (via minimizing $D_\text{KL}(q_\eta||p_\theta)$), the 0-1 loss not only reduces this distributional disparity (via minimizing $D_\text{KL}(p_\theta||q_\eta)$) but also implicitly minimizes entropy. %This dual optimization mechanism provides a theoretical explanation for the empirical observation that EPG drives the model toward lower-entropy solutions compared to CE.

In summary, the entropy and gradient analysis demonstrate a key difference in their optimization behaviors: CE exhibits exploratory behavior while  EPG demonstrates exploitative tendencies. 
Concretely, in parameter space, CE focuses on hard examples that the model is currently uncertain about, which yield high information gain (or ``surprisal'' with $\pi_\theta(y_i|x_i)$ close to 0), as the model must correct large prediction errors. While this drives adaptation to a new task, it can also disrupt existing knowledge, increasing forgetting. In contrast, EPG/aEPG focuses on easier samples that already align well with the model’s predictions (with $\pi_\theta(y_i|x_i)$ close to 1), resulting in lower surprisal and less disruptive updates. In the action space, CE’s exploration bias manifests as higher output entropy than 0-1 loss optimization.

While prior work has shown that increased entropy can benefit classification models by promoting exploration~\citep{DBLP:conf/iclr/PereyraTCKH17,meister2020generalized,dubey2018maximum}, these advantages have primarily been observed in train-from-scratch settings. We hypothesize that this relationship may fundamentally differ for pretrained models, where excessive exploration could cause substantial deviation from the pretrained weights and disrupt past task knowledge. This necessitates a careful re-examination of the exploration-exploitation tradeoff when continually fine-tuning pretrained models.

}

\subsection{Algorithm: Adaptive Entropy Annealing}
\label{sec:aEPG}
To balance exploration (via CE optimization) and exploitation (via EPG optimization), we propose an adaptive entropy annealing method that combines both objectives with a time-dependent weighting scheme. The gradient update is defined as:
\begin{equation}
\label{eq:aepg}
g_{\text{aEPG}}(\theta) = \alpha_t g_{\text{CE}}(\theta) + (1-\alpha_t)(-g_{\text{EPG}}(\theta)),
\end{equation}
where $\alpha_t \in [0,1]$ is an annealing coefficient that evolves throughout training.  The transition is controlled by a sigmoid schedule:  
$
\alpha_t = \sigma\!\left(\tau\frac{T-2t}{T}\right)$,
where $T$ is the total number of training steps, $\sigma(x) = (1 + e^{-x})^{-1}$ is the sigmoid function, and $\tau$ determines the range and sharpness of the transition. We use $\tau=6$ by default, which smoothly interpolates from $\sigma(6)=0.997 \approx 1$  at initialization to $\sigma(-6)=0.003 \approx 0$ convergence.  This design enables a smooth transition from early exploration to late-stage exploitation. Training begins with pure cross-entropy optimization ($\alpha_0=1$), which maintains high entropy, and gradually shifts toward pure EPG optimization ($\alpha_T=0$), which directly optimizes the 0-1 loss.

\section{Experiments}
%%%%%%%%%%%%%%%%%%%%%%%%%%
%\subsection{Experiment Setup}

	\begin{table*}[htbp]
		\centering
		\caption{The effect of entropy regularization across four datasets using LoRA (averaged over 5 runs). }
         \vspace{2pt}
		\resizebox{0.99\textwidth}{!}{
			\begin{tabular}{l>{\centering\arraybackslash}m{1.5cm}ccccccccc}
			\toprule
		\multirow{2}{*}{Algo} &\multirow{2}{*}{$H(p_\theta)$} & \multicolumn{2}{c}{Split-ImageNet-R} & \multicolumn{2}{c}{Split-Food101} & \multicolumn{2}{c}{Split-CUB200} & \multicolumn{2}{c}{CLRS25} \\
			\cmidrule(lr){3-4} \cmidrule(lr){5-6} \cmidrule(lr){7-8} \cmidrule(lr){9-10}
            	% \cmidrule(lr){4-5} \cmidrule(lr){6-7} \cmidrule(lr){8-9} \cmidrule(lr){10-11}
			  & & $A_{10} $ & $\tilde{A}_{10}$   & $A_{10} $ & $\tilde{A}_{10}$ & $A_{10} $ & $\tilde{A}_{10}$ & $A_{5} $ & $\tilde{A}_{5}$ \\
			\midrule
			%\textbf{Lora} \\
			CE & - & 74.1 $\pm$ 0.4 & 79.3 $\pm$ 0.3 & 83.2 $\pm$ 0.2 & 88.8 $\pm$ 0.2 & 83.3 $\pm$ 0.2 & 86.2 $\pm$ 0.1 & 74.2 $\pm$ 0.8 & 83.9 $\pm$ 0.2 \\ \midrule
			Focal &$\uparrow$ & 72.4 $\pm$ 0.3 & 77.9 $\pm$ 0.3 & 82.7 $\pm$ 0.3 & 88.4 $\pm$ 0.2 & 82.9 $\pm$ 0.3 & 86.1 $\pm$ 0.2 & 73.0 $\pm$ 0.9 & 82.5 $\pm$ 0.4 \\
			LS &$\uparrow$ & 70.6 $\pm$ 0.2 & 76.7 $\pm$ 0.2 & 77.5 $\pm$ 0.4 & 85.2 $\pm$ 0.2 & 82.9 $\pm$ 0.2 & 86.6 $\pm$ 0.2 & 74.8 $\pm$ 1.1 & 85.1 $\pm$ 0.5 \\
			CP &$\uparrow$ & 72.4 $\pm$ 0.8 & 77.9 $\pm$ 0.7 & 83.0 $\pm$ 0.2 & 88.9 $\pm$ 0.2 & 83.3 $\pm$ 0.3 & 86.5 $\pm$ 0.2 & 74.0 $\pm$ 1.0 & 83.9 $\pm$ 0.3 \\ \midrule
			EPG &$\downarrow$ & 75.1 $\pm$ 0.4 & 80.0 $\pm$ 0.2 & 83.5 $\pm$ 0.3 & 88.9 $\pm$ 0.2 & 84.2 $\pm$ 0.1 & 85.9 $\pm$ 0.1 & 74.6 $\pm$ 0.8 & 84.3 $\pm$ 0.7 \\
			aEPG & $\downarrow$& \textbf{75.5 $\pm$ 0.1} & \textbf{80.9 $\pm$ 0.1} & \textbf{84.4 $\pm$ 0.1} & \textbf{89.5 $\pm$ 0.1 }& 84.7 $\pm$ 0.3 & 86.7 $\pm$ 0.1 & \textbf{76.3 $\pm$ 0.4} & \textbf{85.5 $\pm$ 0.3 }\\
			EP &$\downarrow$ & 75.1 $\pm$ 0.2 & 80.4 $\pm$ 0.3 & 84.0 $\pm$ 0.2 & 89.2 $\pm$ 0.2 & \textbf{85.0 $\pm$ 0.2} & \textbf{87.2 $\pm$ 0.1} & 74.8 $\pm$ 0.8 & 84.6 $\pm$ 0.5 \\
			\bottomrule
		\end{tabular}
	}
      \label{tab:focal_all}
	\end{table*}

\begin{figure*}
    \centering
\subfigure[Incremental test accuracy]{\includegraphics[width=0.35\linewidth]{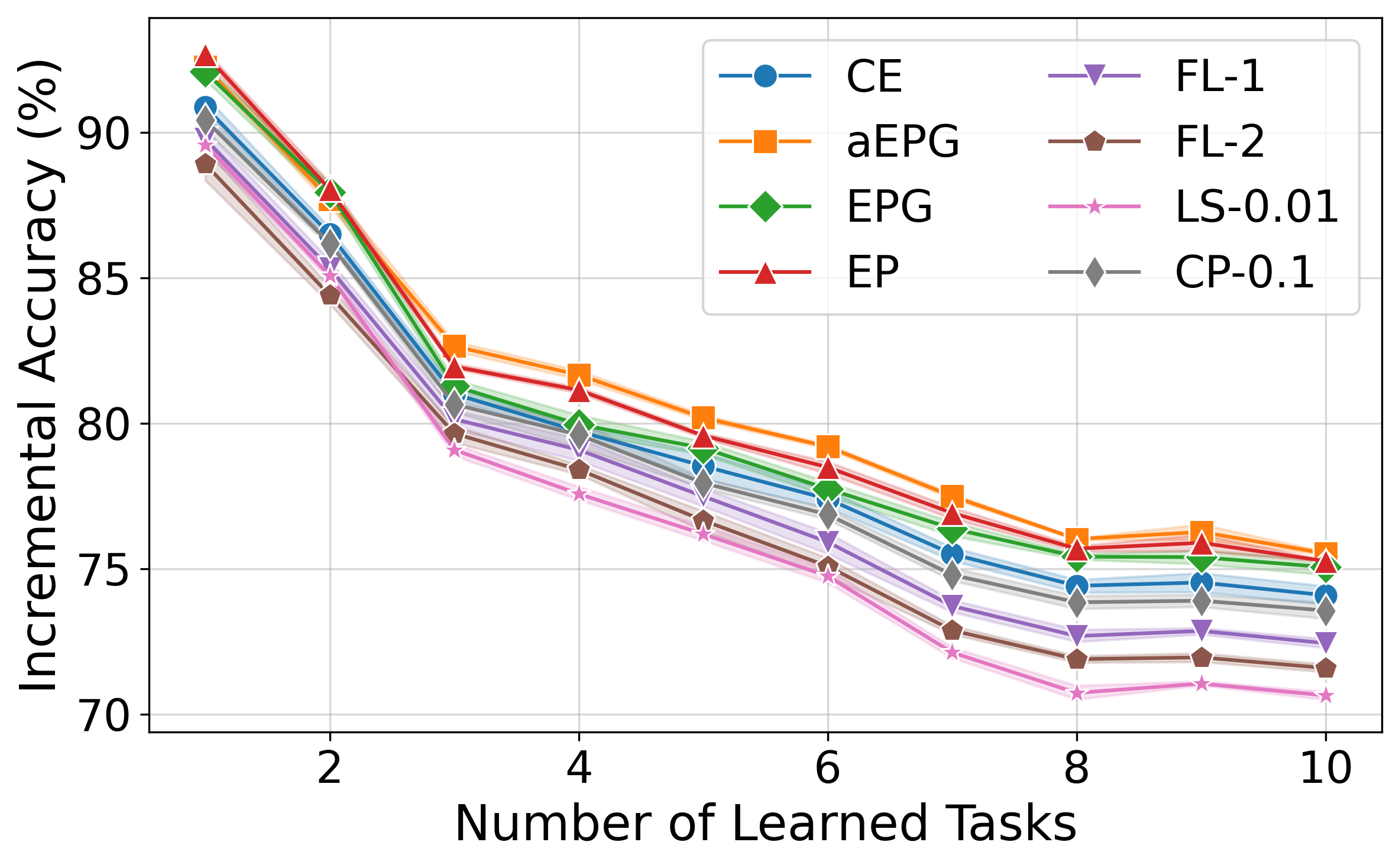}}
\subfigure[Entropy of output distribution]{\includegraphics[width=0.42\linewidth]{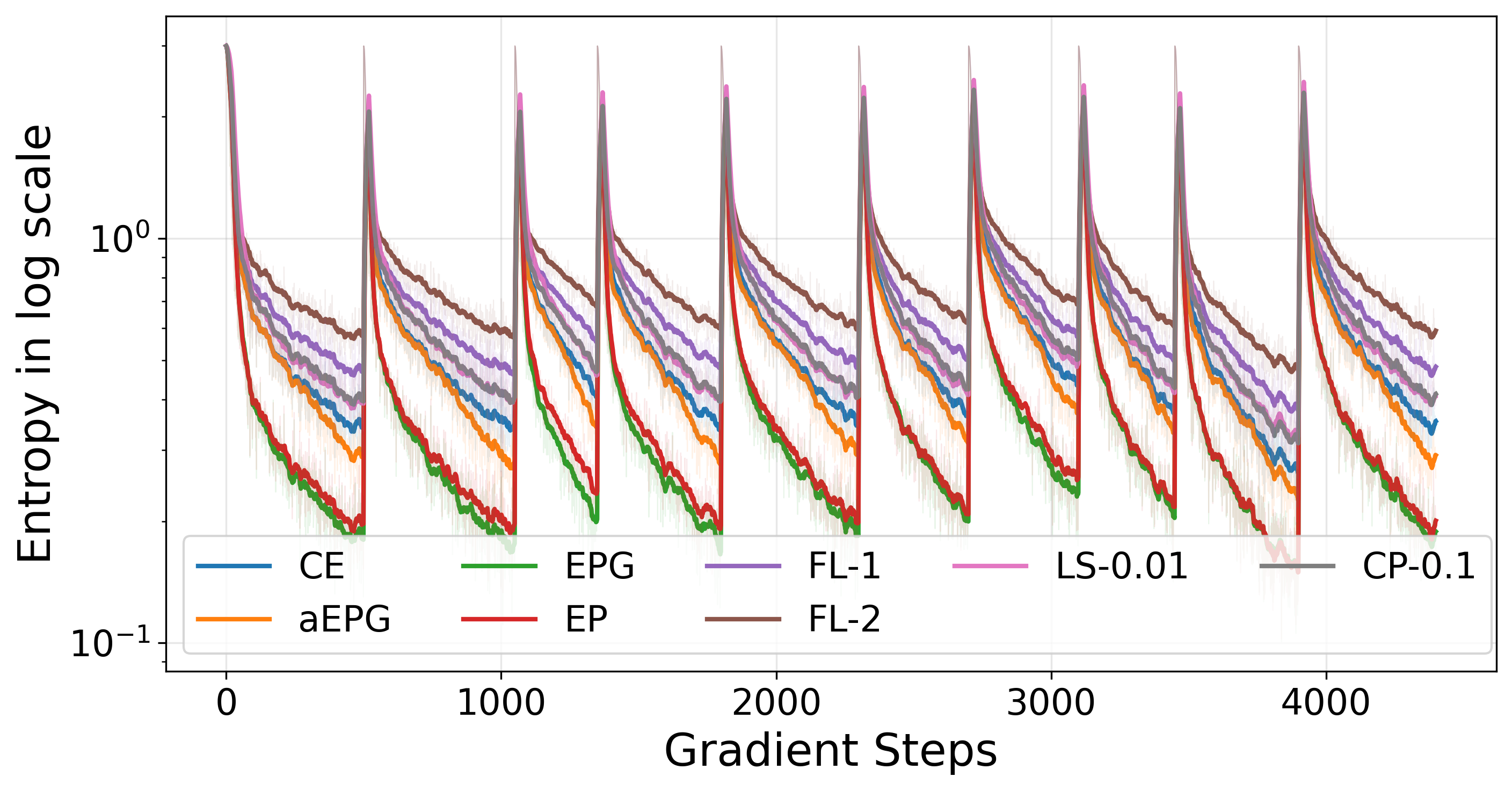}}
    \caption{Entropy dynamics on Split-ImageNet-R. Results are averaged across 5 independent runs, with the 95\% confidence interval shaded. Entropy-reducing methods (EPG, aEPG, EP) achieve higher accuracy than the cross-entropy (CE) baseline, while entropy-increasing methods (focal loss, label smoothing, confidence penalty) result in lower performance. Corresponding results for the Split-Food101 dataset are shown in Appendix Fig.~\ref{fig:focal_food101}.}
    \label{fig:focal_imagenet-r}
\end{figure*}

\textbf{Datasets and Evaluation Metrics.} We evaluate our approach on four diverse datasets: \textbf{ImageNet-R}~\citep{hendrycks2021many} has renditions of 200 ImageNet classes with 24,000 training and 6,000 test samples, naturally exhibiting class imbalance. We partition it into 5, 10, and 20 sequential tasks. \textbf{Food-101}~\citep{bossard2014food} provides balanced classification across 101 food categories (750 training/250 test images per class, 101k total), split into 10 tasks. \textbf{CUB200} ~\citep{wah2011caltech} contains 11,788 bird images across 200 species, which we organize into 10 incremental tasks (20 classes each). \textbf{CLRS}~\citep{li2020clrs} contains large-scale remote sensing data with 25 scene classes (600 images/class, 256×256 resolution), divided into 5 sequential tasks. Regarding metrics, we evaluate all
models with a widely used incremental metric: the end accuracy on all the seen tasks $A_T=1/T \sum_{i=1}^{i=T}a_{i,T}$
, where $T$ is the total number of tasks and $a_{i,j}$ denotes the accuracy of the $j$-th task once the model has learned the $t$-th task.
We also report the average accuracy $\Tilde{A}_T =\frac{1}{T}\sum_{t=1}^{t=T} A_t$. 

\textbf{Continual PEFT Baselines.}
We evaluate our proposed method (aEPG), against seven continual PEFT baselines. These include: a regularization-based approach—LwF~\citep{li2017learning} implemented with LoRA; prompt-based methods—L2P~\citep{wang2022learning}, Dual Prompt~\citep{wang2022dualprompt}, and CodaPrompt~\citep{smith2023coda}; an ensemble-based method, LAE~\citep{gao2023unified}; and a constraint-based approach, InfLoRA~\citep{liang2024inflora}. Further details on these baselines are provided in the related work section.

\textbf{Entropy Regularization Methods.}
We examine several entropy regularization techniques: \textbf{Focal Loss (FL)}~\citep{lin2017focal}, \textbf{Label Smoothing (LS)}~\citep{meister2020generalized}, and \textbf{Confidence Penalty (CP)}~\citep{DBLP:conf/iclr/PereyraTCKH17}, which encourage higher entropy; and \textbf{Entropy Penalty (EP)}~\citep{grandvalet2004semi,DBLP:conf/iclr/WangSLOD21}, which reduces entropy. (See Appendix~\ref{sec:appendix_loss_function} for detailed loss formulations.) %To adapt these methods for continual PEFT, each loss is applied locally—computed only over the class labels of the current task.

% \textbf{Continual PEFT Baselines.} We compare our approach (adaptive EPG, aEPG) against 7 continual PEFT methods. They include a regularization-based method: LwF~\citep{li2017learning} with LoRA, prompt-based methods: L2P~\citep{wang2022learning}, Dual Prompt~\citep{wang2022dualprompt}, CodaPrompt~\citep{smith2023coda}, an ensemble-based method, LAE~\citep{gao2023unified}, and a constraint-based method, InferLoRA~\citep{liang2024inflora}. Details of baseline methods can be found in the related work section.

% \textbf{Entropy Regularization Methods}. We investigate the following entropy regularization techniques, including: \textbf{Focal loss}~\citep{lin2017focal}, \textbf{Label smoothing}~\citep{meister2020generalized}, and \textbf{Confidence penalty}~\citep{DBLP:conf/iclr/PereyraTCKH17} that increase entropy and \textbf{Entropy penalty}~\citep{grandvalet2004semi,DBLP:conf/iclr/WangSLOD21} that decreases entropy. (see Appendix~\ref{sec:appendix_loss_function} for detailed loss formulations). To adapt these methods for continual PEFT, these losses are applied locally, computed exclusively over the current task's categories.

% \begin{equation}
% \label{eq:local_ce}
%     \mathcal{L} = \frac{1}{|\mathcal{D}_i|} \sum_{(\mathbf{x}, y) \in \mathcal{D}_i} \mathcal{L}_{ce}(\operatorname{mask}(f(\mathbf{x}; \boldsymbol{\theta}, \boldsymbol{\phi})), y),
% \end{equation}

\begin{table}[]
\label{tab:label_noise}
 \caption{Continual learning performance with 20\% symmetric label noise (averaged over 3 runs).}
 \vspace{3pt}
\begin{tabular}{lccc}
\toprule
\textbf{20\% $\eta$} & \textbf{S-Imagenet-R} & \textbf{S-CUB200} & \textbf{CLRS25} \\\midrule
% CE                  & 70.6                & 79.4            & 67.4            \\
% aEPG                & 73.7                & 81.3            & 73.4    \\  
CE	& 70.6 $\pm$ 0.1 &	79.5 $\pm$ 	0.3	& 67.7 $\pm$ 	1.2\\
aEPG &	73.8 $\pm$ 	0.2	& 81.3	$\pm$ 0.0 &	73.4 $\pm$ 	1.9 \\

\bottomrule
\end{tabular}
\end{table}

\textbf{Training Details.} We adopt a ViT-B/16 backbone~\citep{DBLP:conf/iclr/DosovitskiyB0WZ21}, pre-trained on ImageNet-21k and fine-tuned on ImageNet-1k. All experiments use PyTorch with the Adam optimizer (learning rate=0.0005, batch size=256). We initialize classifier heads from $\mathcal{N}(0, 0.001)$, an aspect previously uncontrolled in the release code of previous continual fine-tuning works. Following ~\cite{gao2023unified}, the ViT backbone remains frozen for the first 30 epochs before full fine-tuning for 20 additional epochs (50 total). All PEFT modules (LoRA, Adapters, or Prefix Tuning) are applied to the first 5 transformer blocks with LoRA configured to rank 4. Results for the first 10 blocks show a similar pattern and are omitted. More training details can be found in Appendix~\ref{sec:training_details}.

\subsection{Results}

\textbf{Continual PEFT Results.} Following prior works~\citep{smith2023coda,gao2023unified,liang2024inflora}, we first evaluate performance on ImageNet-R with 5, 10, and 20 task splits. As shown in Table~\ref{tab:sota}, aEPG consistently outperforms seven baseline methods across all settings. In addition, the proposed aEPG loss can be seamlessly combined with stability-enhancing strategies such as fast–slow learning paradigm like LAE~\citep{gao2023unified} or regularization-based approaches e.g. LwF~\citep{li2017learning}. Indeed, Table~\ref{tab:sota} shows that combining aEPG with LAE and LwF yields further gains.  

Unlike DualPrompt or InfLoRA, which depend on specific PEFT architectures, aEPG integrates readily with diverse PEFT frameworks. Table~\ref{tab:lae} confirms that aEPG consistently improves over CE-based optimization when applied to Adapters, LoRA, and Prefix Tuning. We further validate its effectiveness across diverse datasets, including Split-Food101, Split-CUB200, and CLRS.  

\textbf{Noisy Labels.} We also evaluate under 20\% symmetric label noise. As shown in Table~\ref{tab:label_noise}, aEPG substantially outperforms CE in this setting, highlighting its robustness against label noise overfitting.

\textbf{Entropy Dynamics.} We examine the role of entropy during continual fine-tuning of pretrained models. As shown in Table~\ref{tab:focal_all} and Fig.~\ref{fig:focal_imagenet-r}, entropy-increasing methods (label smoothing, confidence penalty, focal loss) consistently degrade performance, whereas entropy-reducing methods (EPG, aEPG, EP) yield clear improvements. This trend also holds under different entropy regularization strengths (see Appendix Fig~\ref{fig:focal_entropy_strength}).

Compared to explicit entropy penalties (EP), aEPG reduces entropy implicitly through sample reweighting: harder examples receive greater importance, lowering prediction entropy. Empirically, aEPG surpasses EP on three of four benchmarks, with the only exception being CUB-200. We attribute this to the low intra-class variance in fine-grained tasks like CUB-200, where the entropy reduction from importance sampling is less effective.

\subsection{Ablation Studies}
\label{sec:ablation_study}
\textbf{Effect of $\alpha$ on Objective Combination.}
Fig.~\ref{fig:alpha} shows the impact of the weighting coefficient $\alpha$ when combining CE and EPG in Eq~\ref{eq:aepg}. We observe: 1) lower $\alpha$ values (emphasizing the RL objective) generally produce superior results, and 2) our adaptive $\alpha$ scheduling strategy consistently outperforms fixed $\alpha$ configurations. % These findings suggest that adaptive entropy annealing leads to a better exploration and exploitation trade-off.

\begin{figure}
    \centering
      \subfigure[ImageNetR]{\includegraphics[width=0.48\linewidth]{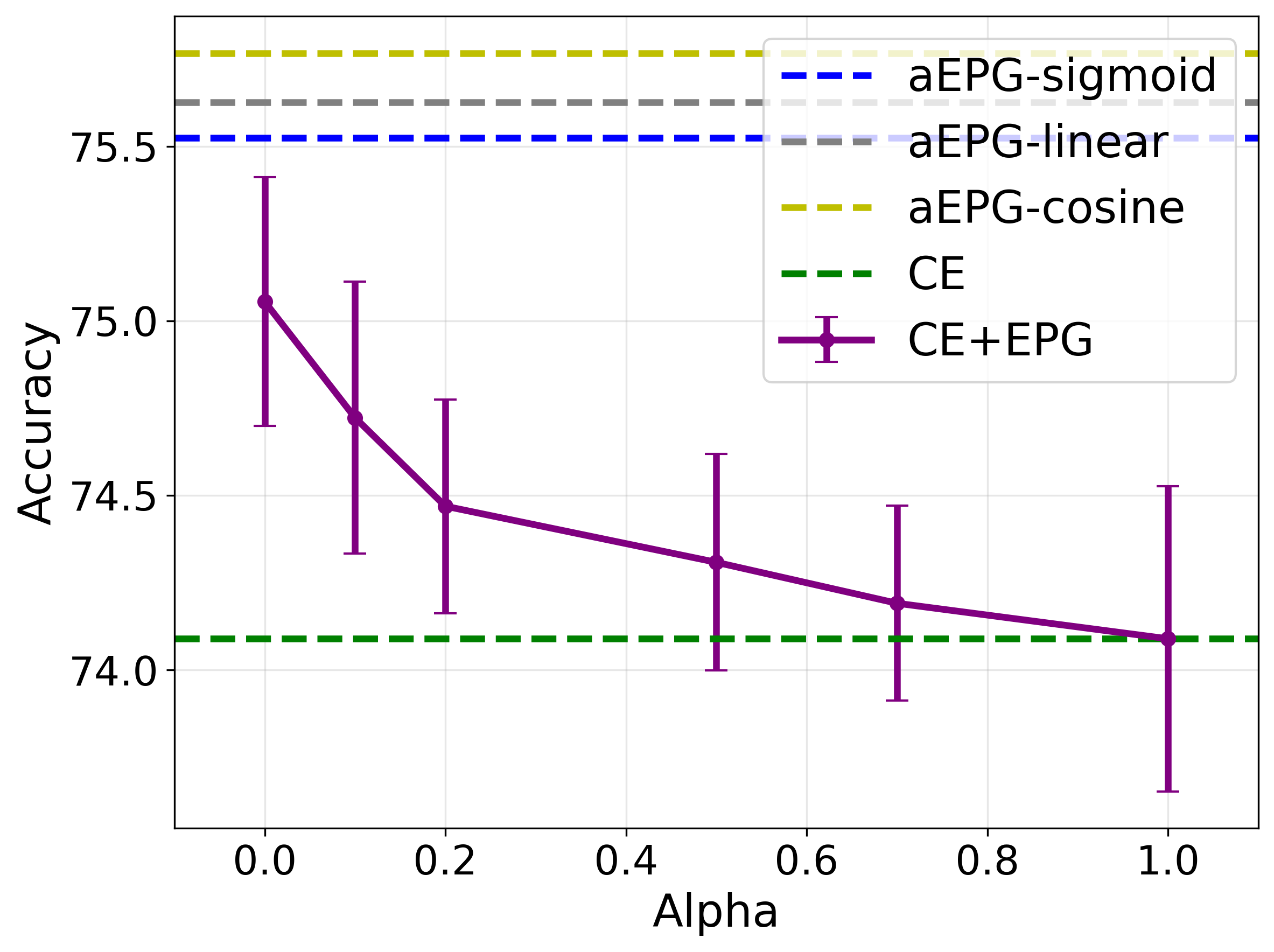}}
% \subfigure[CUB200]{  \includegraphics[width=0.45\linewidth]{fig/alpha_cub200.png}}
      \subfigure[$\alpha$ schedule ]{\includegraphics[width=0.48\linewidth]{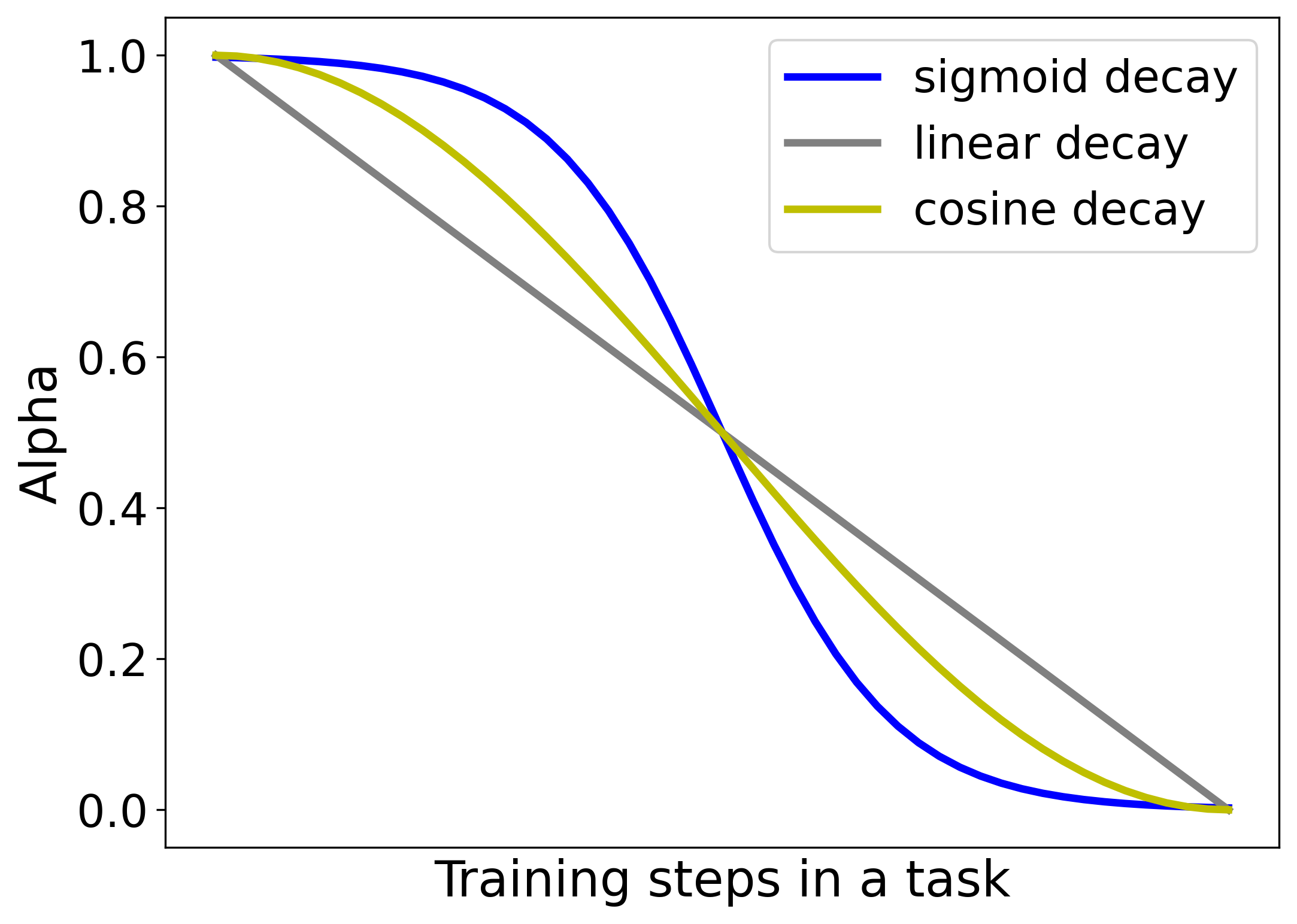}}
    \caption{The effect of $\alpha$ when combining CE and EPG with $\alpha \mathcal{L}_{CE} + (1-\alpha) \mathcal{L}_{EPG}$. Results show the mean and standard deviation across 5  runs.}
    \label{fig:alpha}
\end{figure}

\textbf{Entropy Annealing Mechanism.} We compare entropy annealing schedules, including linear decay ($\alpha_t = \tfrac{T-t}{T}$) and cosine decay ($\alpha_t = \tfrac{1}{2} + \tfrac{1}{2}\cos(\pi \tfrac{t}{T})$). As shown in Fig.~\ref{fig:alpha}, both perform comparably to the default sigmoid schedule. For the sigmoid, $\tau=6$ is used by default, while $\tau=4$ and $\tau=8$ yield similarly smooth transitions from 1 to 0 with no notable performance drop (see Appendix Table~\ref{tab:tau}).

\textbf{Training from Scratch.} Although our focus is continual fine-tuning, the observation that EPG can optimize the 0-1 loss also applies to standard supervised learning. To test this, we trained ResNet-50 from scratch on CIFAR-10 and CIFAR-100 for 350 epochs using the customized learning rate schedule from prior work~\citep{DBLP:conf/iclr/PereyraTCKH17} (see Appendix~\ref{sec:training_details}). Table~\ref{tab:train-from-scratch} shows that, unlike in pretrained models, pure 0-1 optimization ($\alpha=0$) fails to converge on CIFAR-100, indicating that randomly initialized networks require stronger exploration than pretrained weights. Interestingly, combining EPG with CE (e.g., $\alpha=0.2$ or $0.5$) outperforms CE alone, yielding gains of about 2\% on CIFAR-100 and 0.5\% on CIFAR-10. These findings suggest that incorporating 0-1 loss into CE not only improves continual fine-tuning but also enhances training from scratch. Additional results on the evolution of training accuracy and entropy in this setting are provided in Appendix Fig.~\ref{fig:train_scratch}.

\begin{table}[t]
\centering
\caption{ ResNet-50 training from scratch. Test accuracy averaged over 3 independent runs}
\vspace{4pt}
\begin{tabular}{lcc}
\toprule
Method & CIFAR100 & CIFAR10 \\
\midrule
$\alpha=0$ & $7.32 \pm 0.70$ & $92.22 \pm 0.51$ \\
$\alpha=0.2$ & \textbf{80.80} $\pm$ 0.13 & \textbf{95.81} $\pm$ 0.05 \\
$\alpha=0.5$ & $79.95 \pm 0.20$ & $95.62 \pm 0.08$ \\
$\alpha=1$ (CE) & $77.95 \pm 0.78$ & $95.31 \pm 0.18$ \\
\bottomrule
\end{tabular}
\label{tab:train-from-scratch}
\end{table}

\section{Conclusion }
In this work, we revisit the conventional use of cross-entropy loss for learning new data in continual learning through the lens of reinforcement learning and entropy. By contrasting CE with 0-1 loss optimization—solved via expected policy gradient (EPG)—our theoretical and empirical analyses show that CE implicitly emphasizes low-confidence predictions, resulting in excessive exploration and high-entropy output distributions. To better balance exploration and exploitation, we propose an adaptive entropy annealing strategy (aEPG) that gradually interpolates between CE and EPG, yielding consistent improvements across diverse benchmarks and parameter-efficient fine-tuning (PEFT) architectures. Moreover, we question the prevailing assumption that high-entropy regularization aids classification, demonstrating instead that lower-entropy predictions consistently enhance class-incremental learning with pretrained vision transformers.
%Our findings suggest that excessive exploration can disrupt the pretrained model’s knowledge, advocating for a more exploitative learning paradigm in CL.

%\textbf{Limitations}. While our method demonstrates strong performance in continual fine-tuning with vision transformers, it has several limitations. First, our theoretical and empirical analyses assume a standard supervised setting with clean, hard labels, leaving EPG’s robustness to noisy or ambiguous samples an open question for future work. Second, our experiments focus exclusively on class-incremental learning with pretrained vision transformers, and further validation is needed to assess generalizability to other architectures (e.g., CNNs) or modalities (e.g., language models).

\bibliography{CL}
\bibliographystyle{apalike}
\section*{Checklist}

% % %%% BEGIN INSTRUCTIONS %%%
% The checklist follows the references. For each question, choose your answer from the three possible options: Yes, No, Not Applicable.  You are encouraged to include a justification to your answer, either by referencing the appropriate section of your paper or providing a brief inline description (1-2 sentences). 
% Please do not modify the questions.  Note that the Checklist section does not count towards the page limit. Not including the checklist in the first submission won't result in desk rejection, although in such case we will ask you to upload it during the author response period and include it in camera ready (if accepted).

% \textbf{In your paper, please delete this instructions block and only keep the Checklist section heading above along with the questions/answers below.}
% % %%% END INSTRUCTIONS %%%

\begin{enumerate}

  \item For all models and algorithms presented, check if you include:
  \begin{enumerate}
    \item A clear description of the mathematical setting, assumptions, algorithm, and/or model. [Yes]
    Section 3.1 and Section 4 describe the problem settings and models used. Section 3.4 describes the algorithm.
    \item An analysis of the properties and complexity (time, space, sample size) of any algorithm. [No]
    The proposed method does not change complexity compared to the baseline, i.e. cross entropy loss.
    \item (Optional) Anonymized source code, with specification of all dependencies, including external libraries. [Yes]
  \end{enumerate}

  \item For any theoretical claim, check if you include:
  \begin{enumerate}
    \item Statements of the full set of assumptions of all theoretical results. [Yes] See Proposition 1-3.
    \item Complete proofs of all theoretical results. [Yes] See Proposition 1-3.
    \item Clear explanations of any assumptions. [Yes]  See Proposition 1-3    
  \end{enumerate}

  \item For all figures and tables that present empirical results, check if you include:
  \begin{enumerate}
    \item The code, data, and instructions needed to reproduce the main experimental results (either in the supplemental material or as a URL). [Yes] We provide the code and instructions in the supplemental material.
    \item All the training details (e.g., data splits, hyperparameters, how they were chosen). [Yes]. See Appendix~\ref{sec:training_details}.
    \item A clear definition of the specific measure or statistics and error bars (e.g., with respect to the random seed after running experiments multiple times). [Yes] This information is described in the table and figure captions.
    \item A description of the computing infrastructure used. (e.g., type of GPUs, internal cluster, or cloud provider). [Yes]. See Appendix~\ref{sec:training_details}
  \end{enumerate}

  \item If you are using existing assets (e.g., code, data, models) or curating/releasing new assets, check if you include:
  \begin{enumerate}
    \item Citations of the creator If your work uses existing assets. [Yes] Data and Model is cited in the Experiment Section. The code is cited in Appendix~\ref{sec:training_details}.
    \item The license information of the assets, if applicable. [Yes]
    \item New assets either in the supplemental material or as a URL, if applicable. [Yes] We provide code in the supplemental material.
    \item Information about consent from data providers/curators. [Yes]
    \item Discussion of sensible content if applicable, e.g., personally identifiable information or offensive content. [Not Applicable]
  \end{enumerate}

  \item If you used crowdsourcing or conducted research with human subjects, check if you include:
  \begin{enumerate}
    \item The full text of instructions given to participants and screenshots. [Not Applicable]
    \item Descriptions of potential participant risks, with links to Institutional Review Board (IRB) approvals if applicable. [Not Applicable]
    \item The estimated hourly wage paid to participants and the total amount spent on participant compensation. [Not Applicable]
  \end{enumerate}

\end{enumerate}

\clearpage
\appendix
\thispagestyle{empty}

% Supplementary material: To improve readability, you must use a single-column format for the supplementary material.
\onecolumn

\section{Proofs}
\subsection{Proof of Proposition~\ref{proposition_01}}
\label{sec:appendix_proof_01}
\begin{proof}
By interpreting $h_\theta(x)$ as the policy $\pi_\theta(a|y)$ in Eq~\ref{eq:rl_objective} and applying a constant baseline of value 1 to the reward function $\mathcal{R}_{x,a}$ (Eq.~\ref{eq_reward_def}), we obtain:
\begin{equation}
\begin{aligned}
       J_h(\theta) &=\mathbb{E}_{x\sim d(x),a\sim h_\theta}[r]\\
      & = 1-\sum_{x \in \mathcal{X}} d(x) \sum_{a \in \mathcal{A}} h_\theta(a|x) (-\mathcal{R}_{x,a}+1)\\
&= 1 - \mathcal{L}_{01}(h_\theta) .
\end{aligned}
\end{equation}
The constant offset does not affect the optimization objective, thus establishing the equivalence.
\end{proof}
\subsection{Proof of Proposition~\ref{proposition:label noise}}
\label{sec:appendix_proof_label_noise}

\begin{proof} 
Under symmetric label noise with rate $\eta$, the objective of EPG becomes
\begin{equation}
 \begin{aligned}
    J^\eta(\theta)
    &= (1-\eta) \sum_{x \in \mathcal{X}} d(x)\Bigg(\sum_{a \neq y} \pi_\theta(a|x) \cdot 0 + \pi_\theta(y|x) \cdot 1\Bigg) \\
    &\quad + \sum_{x \in \mathcal{X}} d(x)\frac{\eta}{K-1}\sum_{a \neq y} \pi_\theta(a|x) \cdot 1 \\
    &= (1-\eta) J(\theta) + \sum_{x \in \mathcal{X}} d(x)\frac{\eta}{K-1}\big(1-\pi_\theta(y|x)\big) \\
    &= \Big(1-\tfrac{K}{K-1}\eta\Big) J(\theta) + \tfrac{\eta}{K-1}.
 \end{aligned}
\end{equation}

Let $\theta^\eta_*$ denote the global maximizer of $J^\eta(\theta)$ under noisy labels, i.e., $J^\eta(\theta^\eta_*) - J^\eta(\theta) \geq 0$ for all $\theta$. Then, we have
\begin{equation}
\begin{aligned}
    J(\theta^\eta_*) - J(\theta) 
    &= \frac{1}{1 - \tfrac{K}{K-1}\eta}\big(J^\eta(\theta^*) - J^\eta(\theta)\big) \\
    &\geq 0, \quad \text{whenever } \eta < 1 - \tfrac{1}{K}.
\end{aligned}
\end{equation}
Therefore, the optimizer $\theta^\eta_*$ under noisy labels remains the global maximizer of $J(\theta)$ under clean labels, proving noise tolerance of EPG under the stated condition.
\end{proof}
\subsection{Proof of Proposition~\ref{proposition:KL}}
\label{sec:appendix_proof_kl}
\begin{proof}
Let $p_\theta(y|x)$ denote the model's predictive distribution, and let $q_\eta(y|x)$ denote the label-noise target distribution:
\[
q_\eta(y_k|x) =
\begin{cases} 
1 - \eta & \text{if } y_k = y^*, \\[1mm]
\frac{\eta}{K-1} & \text{otherwise},
\end{cases}
\]
where $y^*$ is the observed label, $K$ is the number of classes, and $\eta \in (0,1)$ is the label noise rate.

The KL divergence between $p_\theta$ and $q_\eta$ is
\[
D_\text{KL}(p_\theta \parallel q_\eta) 
= \sum_{k=1}^K p_\theta(y_k|x) \log \frac{p_\theta(y_k|x)}{q_\eta(y_k|x)}
= -H(p_\theta) - \sum_{k=1}^K p_\theta(y_k|x) \log q_\eta(y_k|x),
\]
where $H(p_\theta) = -\sum_k p_\theta(y_k|x) \log p_\theta(y_k|x)$ is the entropy of $p_\theta$.

Consider the second term, we have:
\begin{equation}
    \begin{aligned}
   - D_\text{KL}(p_\theta \parallel q_\eta) - H(p_\theta)
&= \sum_{k=1}^K p_\theta(y_k|x) \log q_\eta(y_k|x) \\
       &= p_\theta(y^*|x) \log(1-\eta) + \sum_{y_k \neq y^*} p_\theta(y_k|x) \log \frac{\eta}{K-1} \\
       & =  p_\theta(y^*|x) \log(1-\eta) +  \log \frac{\eta}{K-1} \sum_{y_k \neq y^*} p_\theta(y_k|x)\\
       & =p_\theta(y^*|x) \log(1-\eta) + \big(1 - p_\theta(y^*|x)\big) \log \frac{\eta}{K-1}\\
       &= \log(1-\eta) \, \mathbb{E}_{p_\theta}[\mathcal{R}(y,x)] + \log\Big(\frac{\eta}{K-1}\Big) \big(1 - \mathbb{E}_{p_\theta}[\mathcal{R}(y,x)]\big)\\
       &= A \cdot \mathbb{E}_{p_\theta}[\mathcal{R}(y,x)]+B
    \end{aligned}
\end{equation}
where $A=\log(1-\eta) - \log\Big(\frac{\eta}{K-1}\Big)$ and $B=\log\Big(\frac{\eta}{K-1}\Big) $

Therefore, minimizing
$
D_\text{KL}(p_\theta \parallel q_\eta) + H(p_\theta) $ is equal to maximizing  $\mathbb{E}_{p_\theta}[\mathcal{R}(y,x)]$, with $\eta \in (0,1)$

\end{proof}

\section{Loss function details}
\label{sec:appendix_loss_function}
Table~\ref{tab:loss} compares the objective functions of different entropy regularization methods. Focal loss, label smoothing, and confidence penalty increase entropy, whereas EPG, aEPG, and entropy penalty reduce it.
\begin{table*}[htb]
%\label{tab:loss}
\centering
\caption{Loss functions and their effects on entropy}
\label{tab:loss}
\begin{tabular}{l l l}
\toprule
\textbf{Training Method} & \textbf{Loss Function} & \textbf{Entropy} \\
\midrule
Cross Entropy & 
$L_{CE} = -\sum q \log p_{\theta}$ &
 Baseline \\
\midrule
Confidence Penalty& 
$L_{CP} = L_{CE} - \beta H(p_{\theta} )$ &
 \\
Label Smoothing & 
%$L_{LS}(\theta) = L_{CE}(\theta) + \beta D_{KL}(u \| p_{\theta})$ 
$L_{LS}=(1-\gamma) L_{CE}+\gamma D_{\mathrm{KL}}\left(u \| p_\theta\right)$
&
Increase entropy $\uparrow$ \\
Focal loss & $L_{FL}=(1-p_\theta)^\gamma L_{CE}$ \\ \midrule
Expected Policy Gradient & $L_{EPG}= -\mathbb{E}_{p_\theta}[q]$\\
Entropy penalty & $L_{EP} = L_{CE} + H(p_{\theta})$ & Decrease entropy $\downarrow$\\
aEPG & $L_{aEPG}=\alpha_t L_{CE}+(1-\alpha_t) L_{EPG}$\\
% Generalized Entropy Regularization, $D_{J\alpha}$ & 
% $L_{GEN}(\theta) = L_{CE}(\theta) + \beta D_{J\alpha}(u \| p_{\theta})$ &
% -- \\
\bottomrule
\end{tabular}
\end{table*}
\section{Implementation Details}
\label{sec:training_details}
\textbf{Continual Fine-tuning Experiments}. We evaluate all methods using consistent pretraining weights for vision transformer (vit\_base\_patch16\_224 from timm library)\footnote{\url{https://storage.googleapis.com/vit_models/augreg/B_16-i21k-300ep-lr_0.001-aug_medium1-wd_0.1-do_0.0-sd_0.0--imagenet2012-steps_20k-lr_0.01-res_224.npz}} and optimization settings. 
The detailed hyperparameter settings for all algorithms are listed in Table~\ref{tab:hyperparameter_continual}. 
For L2P, DualPrompt, CodaPrompt, InfLoRA, and LAE, we adopt the key algorithm-specific hyperparameters following their original papers and official implementations. Our implementation mostly builds upon the LAE codebase~\footnote{\url{https://github.com/gqk/LAE}}, which are used to implement aEPG, LwfLoRA, and LAE. For DualPrompt, we use the PyTorch implementation from \cite{wang2022dualprompt}~\footnote{\url{https://github.com/JH-LEE-KR/dualprompt-pytorch}}. For L2P and CodaPrompt, we use the official PyTorch implementation from ~\cite{smith2023coda}~\footnote{\url{https://github.com/GT-RIPL/CODA-Prompt}}. InfLoRA results are based on the code released at ~\cite{liang2024inflora}~\footnote{\url{https://github.com/liangyanshuo/InfLoRA}}.

Our experiments were conducted on internal clusters consisting of NVIDIA GPUs. The average runtime for a single dataset in one independent run ranges between 1--5 hours, depending on the task complexity.
% ViT version.    We use vit\_base\_patch16\_224 from timm library.

\begin{table*}[]
    \centering
      \caption{Hyperparameter setting in the continual fine-tuning experiments.}
    \begin{tabular}{c|c} \toprule
    Hyperparameters & Settings \\ \midrule
    Pretrained model & vit\_base\_patch16\_224\\
  Training epoch & 50 \\
  Backbone freeze epoch & 30 \\
        Batch size & 256\\
        Learning rate & 0.0005  \\
        Optimizer & Adam ($\beta1=0.9, \beta2=0.999$, eps=1e-08)\\ 
        Weight decay & 0 \\
        Gradient clipping & None \\
        Classifier initialization & Normal distribution with std of 0.001 \\
        \multirow{2}{*}{Augmentation} & Random Resized Crop: scale = (0.05, 1.0), ratio = (3. / 4., 4. / 3.), \\
        & Random Horizontal Flip (p=0.5)\\
        \midrule
        Focal loss & gamma: 0.5,1,2 \\
        Label smoothing & smooth parameter: 0.01, 0.05, 0.1 \\
        Confidence penalty & penalty intensity: 0.1,0.2\\
        aEPG & tau = 6 \\
        EP & beta = 1 \\
        LwF & KL coef: 1, KL temperature: 1 \\
        Dualprompt & $L_g=5$, $L_e=20$ \\
        InfLoRA & $\epsilon=1e-8$, lamb=0.99, lame=1.0, rank=5 \\
        LAE  & EMA decay: 0.999\\
        \midrule
        LoRA & block: [0-4], rank = 4\\
        Adapter& block: [0-4], down\_sample = 5\\
        Prefix &block: [0-4], length = 10\\
        \bottomrule
    \end{tabular}
  
    \label{tab:hyperparameter_continual}
\end{table*}

\textbf{Train from Sratch}. All models were trained for 350 epochs with a learning rate reduced by a factor of 10 at epochs 150 and 225. We used Stochastic Gradient Descent (SGD) with a batch size of 256 and momentum of 0.9. We report mean performance metrics with standard deviations across 3 independent runs with different random seeds. Detail hyperparameters are shown in Table~\ref{tab:hyper_scratch}.

\begin{table}[]
    \centering
      \caption{Train from scratch hyperparameter setting}
    \begin{tabular}{c|c} \toprule 
    Hyperparameters & Setting \\ \midrule
          Batch size & 256\\
  Training epoch & 350 \\
LR milestone & 100,225 \\
        Learning rate & 0.1,0.01,0.001  \\
        Optimizer & SGD\\
        Momentum & 0.9 \\
        Weight decay & 0 \\
        Gradient clipping & None \\
        \multirow{2}{*}{Augmentation} & Random Crop: padding=4, \\
        & Random Horizontal Flip (p=0.5)\\
        \bottomrule
    \end{tabular}
  
    \label{tab:hyper_scratch}
\end{table}

    % "vit_base_patch16_224": _cfg(
    %     url="https://storage.googleapis.com/vit_models/augreg/"
    %     "B_16-i21k-300ep-lr_0.001-aug_medium1-wd_0.1-do_0.0-sd_0.0--imagenet2012-steps_20k-lr_0.01-res_224.npz"
 
%%%%%%%%%%%%%%%%%%%%%%%%%%%%%%%%%%%%%%%%%%%%%%%%%%%%%%%%%%%%
\section{Additional Experiment Results}
\subsection{Training from Scratch}
\textbf{Evolution of train accuracy and entropy}. Figure~\ref{fig:train_scratch} illustrates the test accuracy and entropy evolution during training on CIFAR100 and CIFAR10 from random initialization. Consistent with our finding in the continual fine-tuning experiments, EPG optimization yields faster entropy convergence than CE optimization, as shown in Fig.~\ref{fig:train_scratch}c in the appendix. However, when training from a randomly initialized model, we observe that EPG's entropy decreases excessively, ultimately hindering the learning process, a phenomenon not observed when initializing from pretrained models. We observe that smaller alpha values accelerate entropy convergence, with $\alpha=0.2$ achieving optimal performance (2\% improvement over standard cross-entropy). This demonstrates the advantage of combining 0-1 loss with cross-entropy. Notably, pure 0-1 optimization ($\alpha=0$) fails to converge effectively for CIFAR100, unlike in pretrained models. This suggests that randomly initialized networks require stronger initial exploration.
\begin{figure}
    \centering
   % \subfigure[The effect of alpha]{ \includegraphics[width=0.32\linewidth]{fig/alpha_scratch_cifar100_cifar100.png}}
   % \subfigure[Training accuracy]{  \includegraphics[width=0.3\linewidth]{fig/cifar100_train_acc_bold.png}}
\subfigure[Test accuracy]{
          \includegraphics[width=0.4\linewidth]{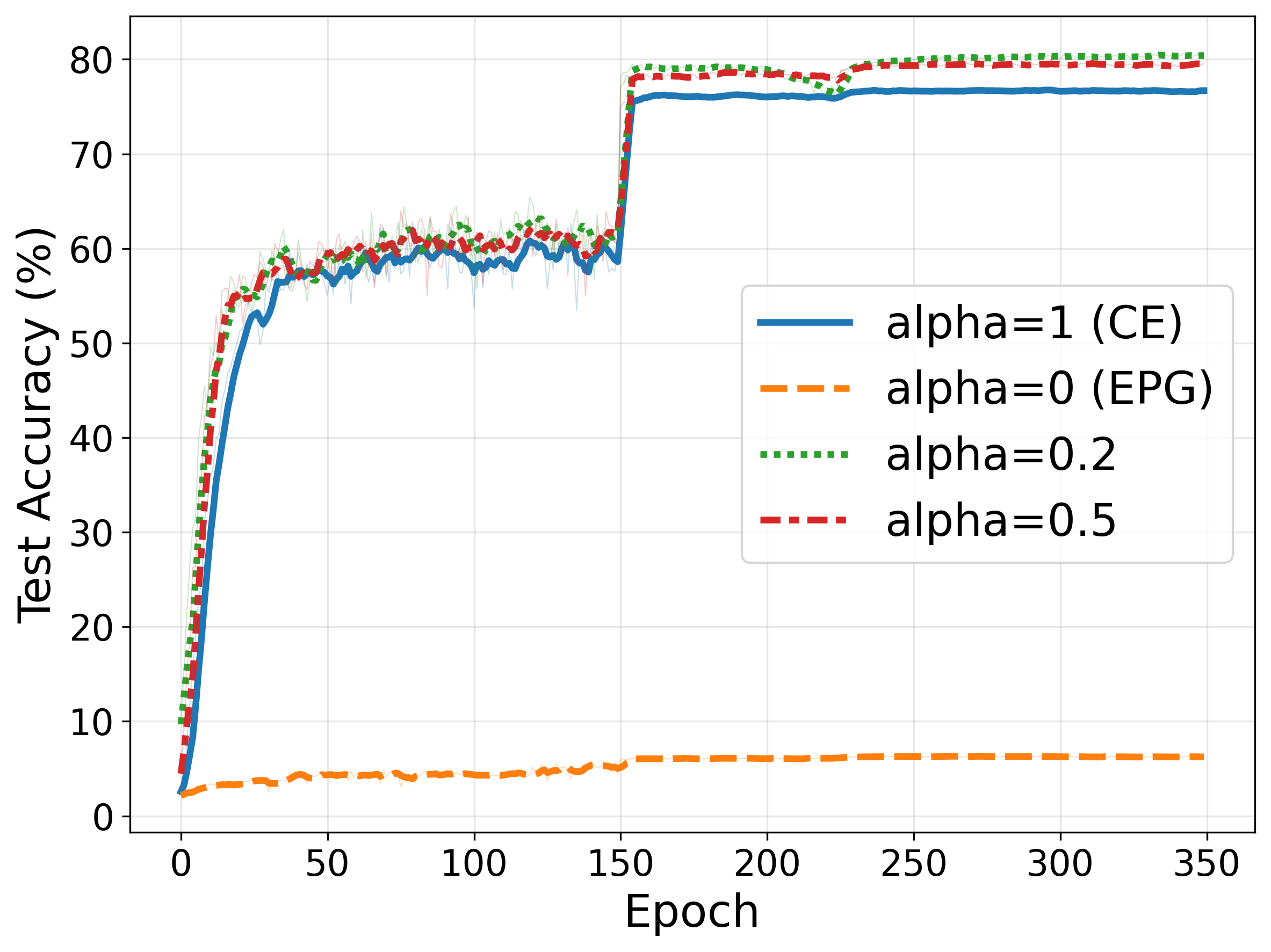}}
          \subfigure[Entropy]{
          \includegraphics[width=0.4\linewidth]{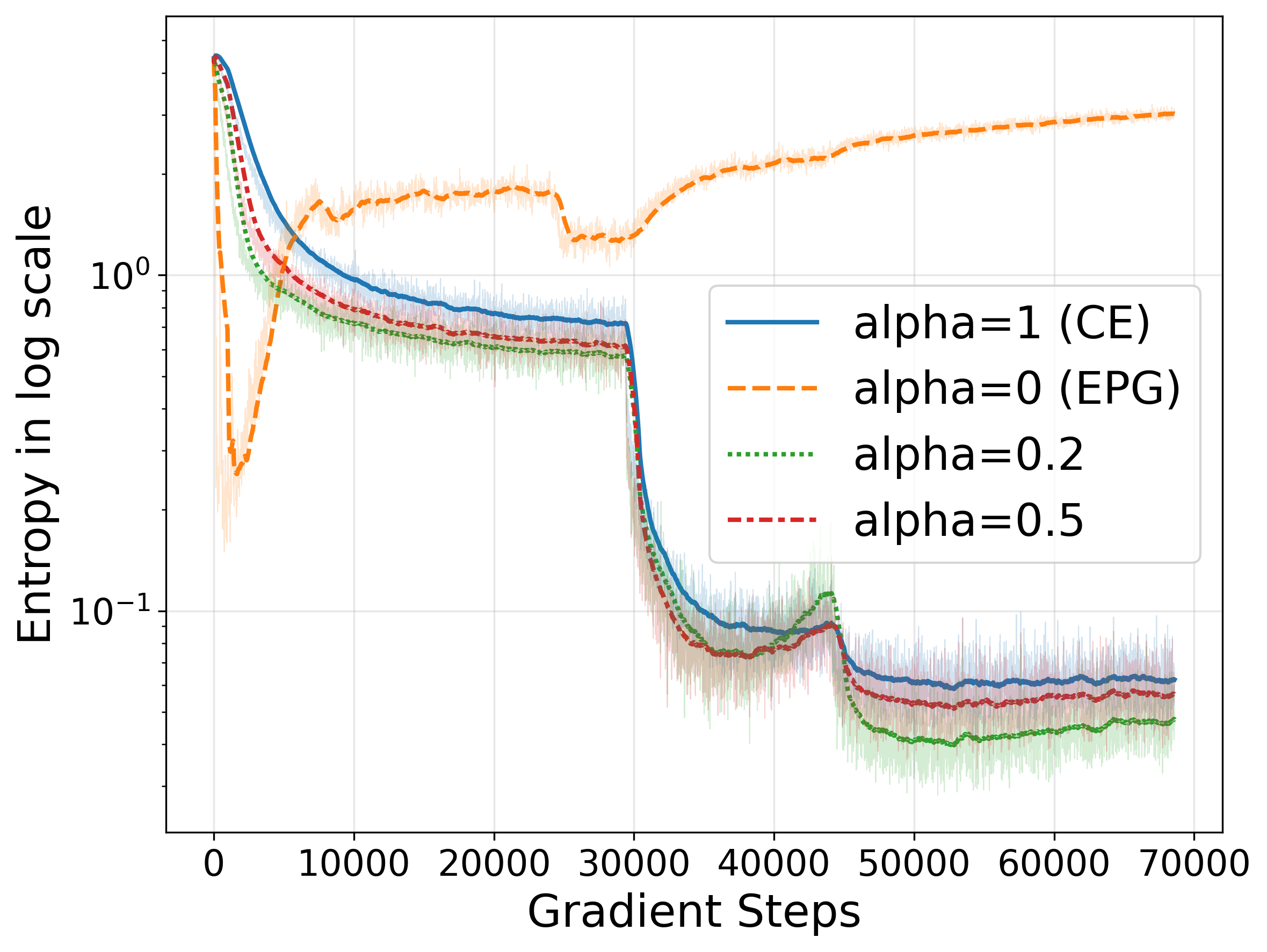}}
    \caption{Training CIFAR100 with ResNet50 from scratch. CE-EPG with an alpha value of 0.2 outperforms the standard CE loss (Best test accuracy: 81\% vs. 78\%.)}
    \label{fig:train_scratch}
\end{figure}
\subsection{Continual Fine-tuning Results}
\label{sec:appendix_results}
\textbf{Entropy Dynamics in Food101}. Figure~\ref{fig:focal_food101} illustrates the entropy dynamics on the Split-Food101 dataset, revealing trends similar to those observed on Split-ImageNetR. Compared to cross-entropy loss, Expected Policy Gradient (EPG), adaptive EPG (aEPG), and Entropy Penalty (EP) achieve lower entropy and higher accuracy. In contrast, focal loss, label smoothing, and confidence penalty (CP) result in higher entropy and degraded performance. Label smoothing is particularly detrimental: even with a small smoothing parameter (0.01), it reduces final accuracy by approximately 5\%.

\textbf{Strength of Entropy Regularization} Figure~\ref{fig:focal_entropy_strength} analyzes the effect of entropy regularization strength in focal loss, label smoothing, and confidence penalty. Increasing regularization typically leads to substantially higher entropy, which in turn degrades performance. For example, label smoothing with a parameter of 0.1 performs worse than with 0.01, reinforcing our observation that excessive exploration harms continual fine-tuning. The only exception is focal loss: both gamma=2 and gamma=0.5 underperform compared to gamma=1.
\begin{figure}
    \centering
\subfigure[Incremental test accuracy]{\includegraphics[width=0.45\linewidth]{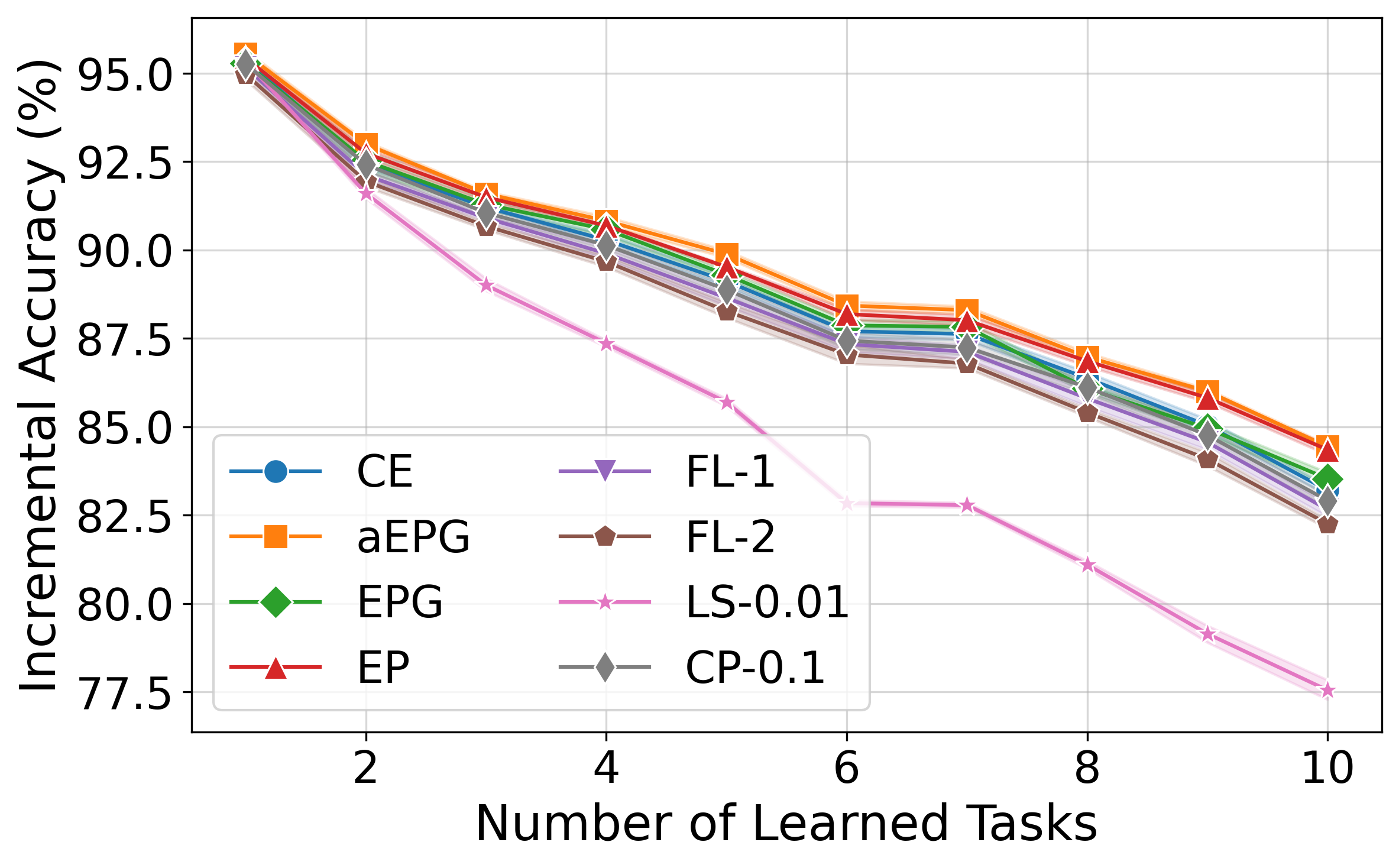}}
\subfigure[Entropy of output distribution]{\includegraphics[width=0.53\linewidth]{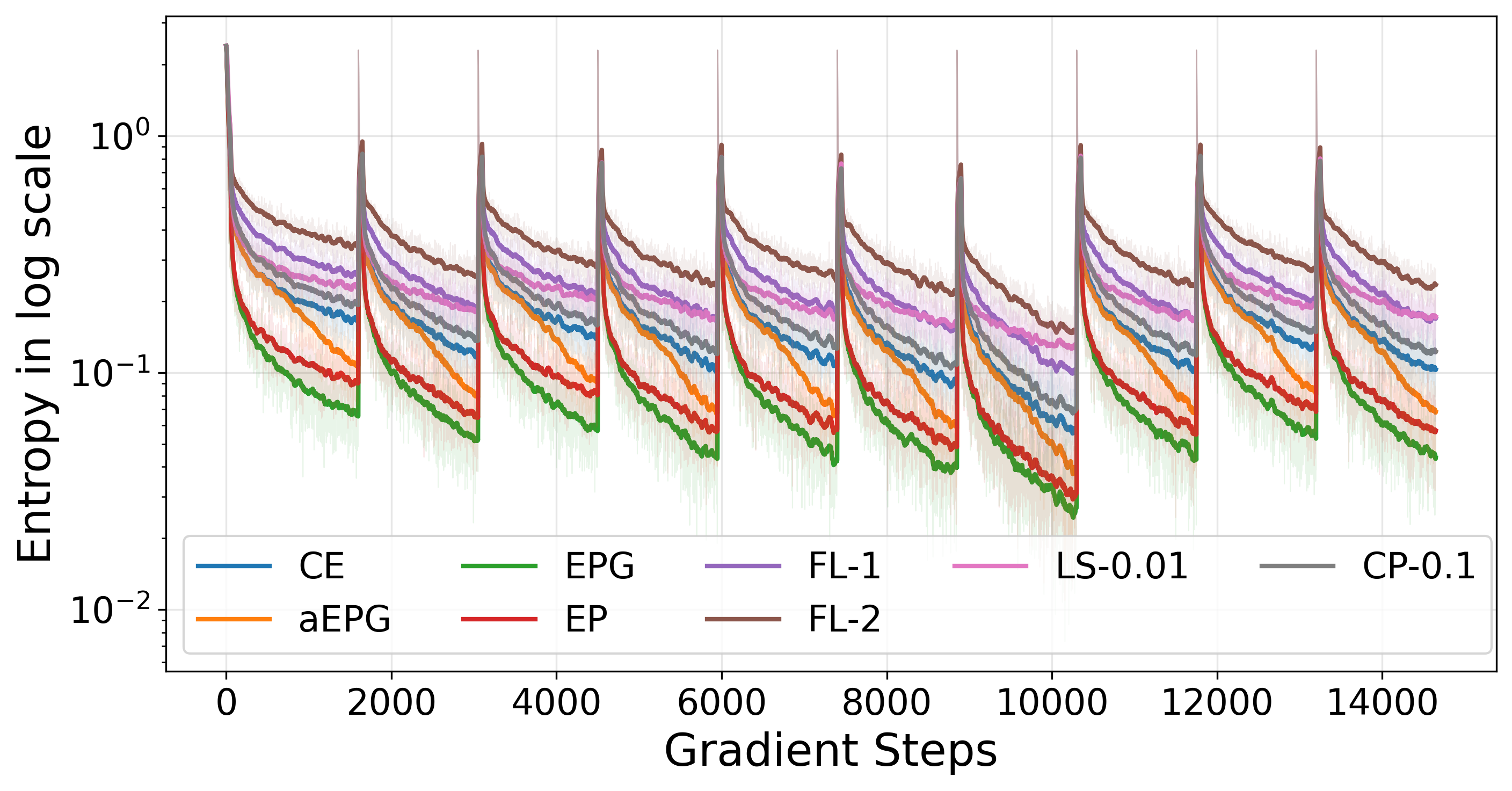}}
    \caption{Entropy dynamics in continual fine-tuning VisionTransformers on Split-Food101. Compared to the cross-entropy loss, Expected Policy Gradient (EPG), adaptive EPG (aEPG), and Entropy Penalty (EP) lead to lower entropy and improved accuracy. In contrast, focal loss, label smoothing, and confidence penalty (CP) lead to higher entropy and worse performance. }
    \label{fig:focal_food101}
\end{figure}

\begin{figure}
    \centering
\subfigure[Focal loss]{\includegraphics[width=0.44\linewidth]{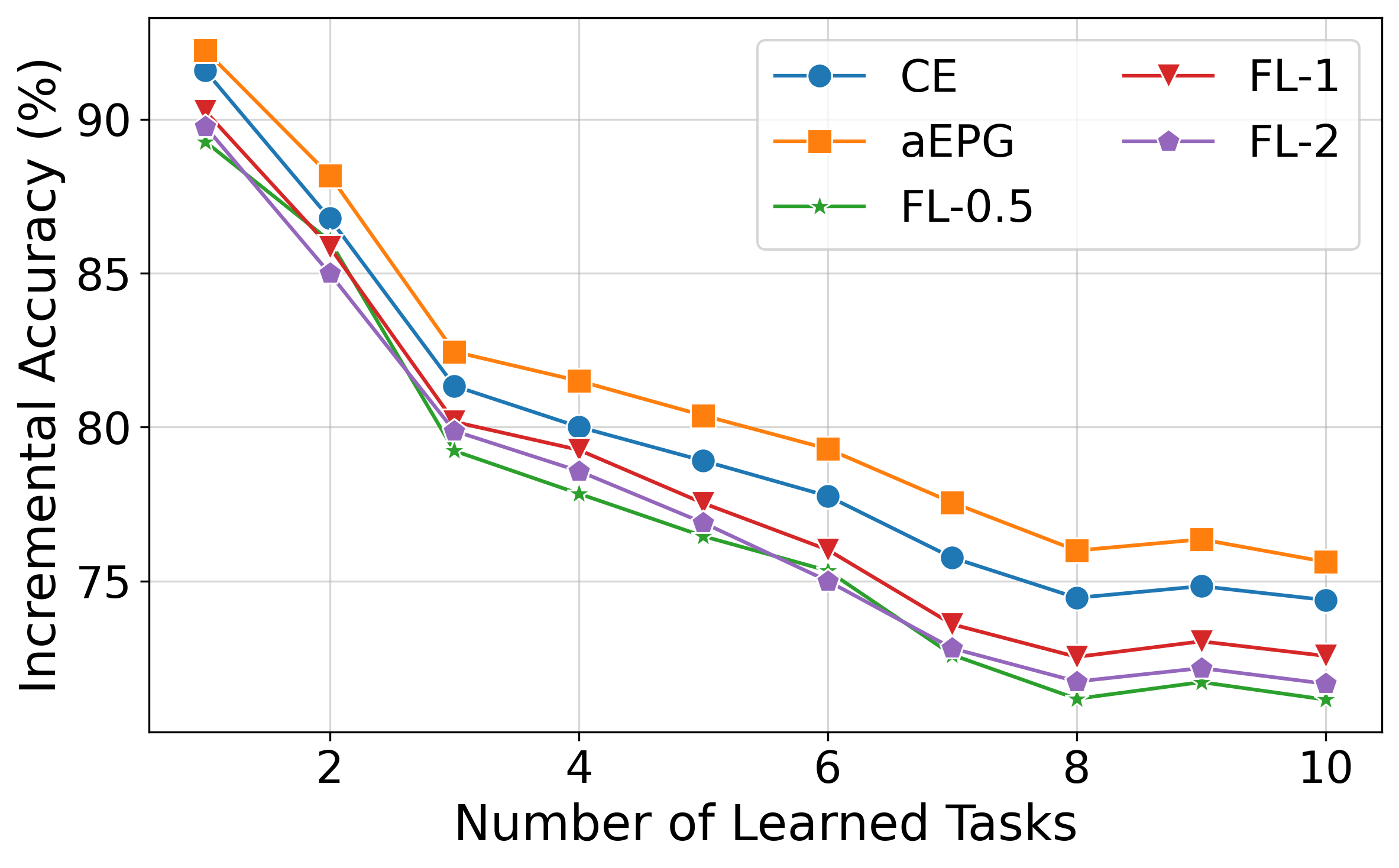}
\includegraphics[width=0.55\linewidth]{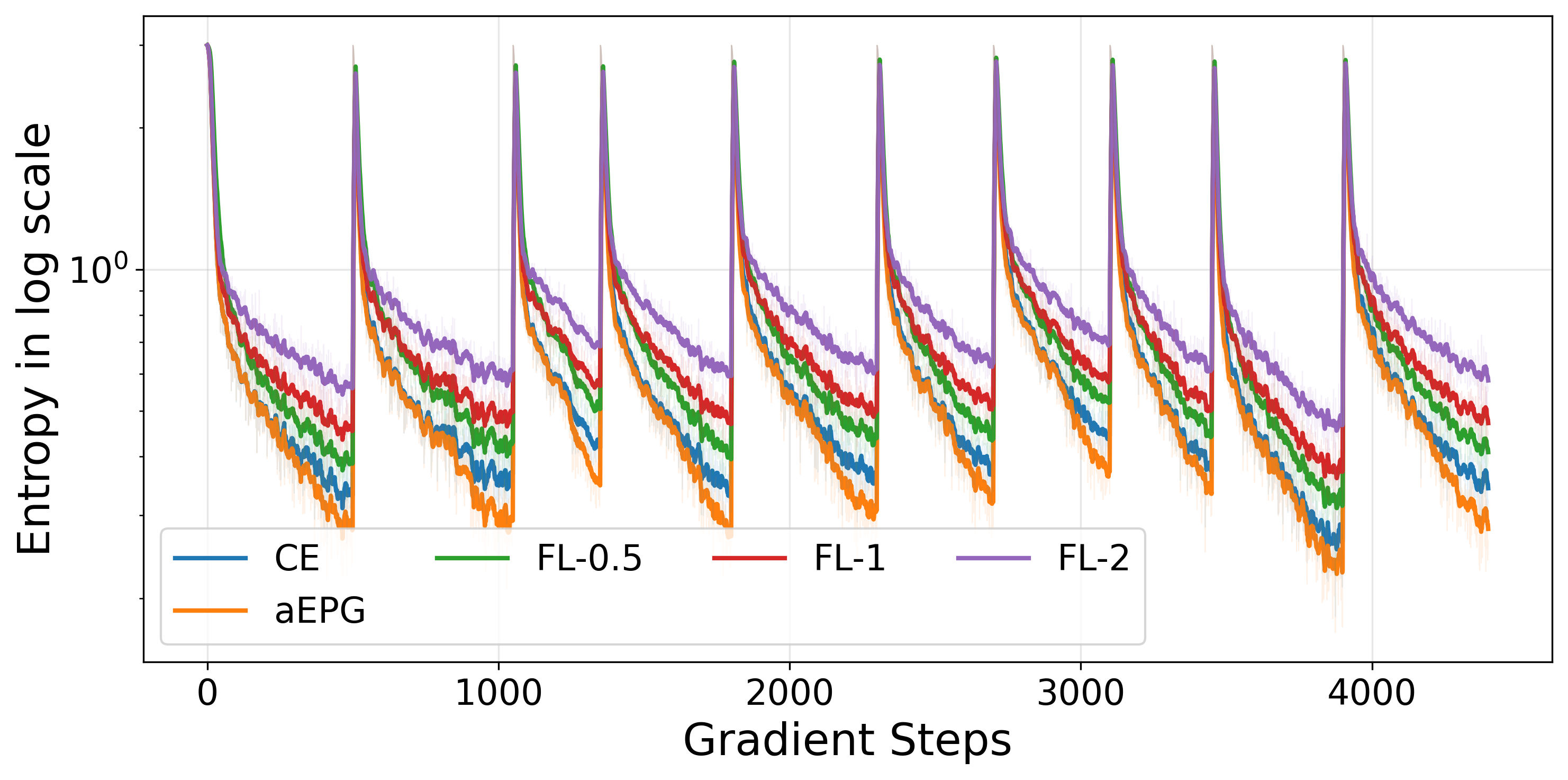}}
\subfigure[Label smoothing]{\includegraphics[width=0.44\linewidth]{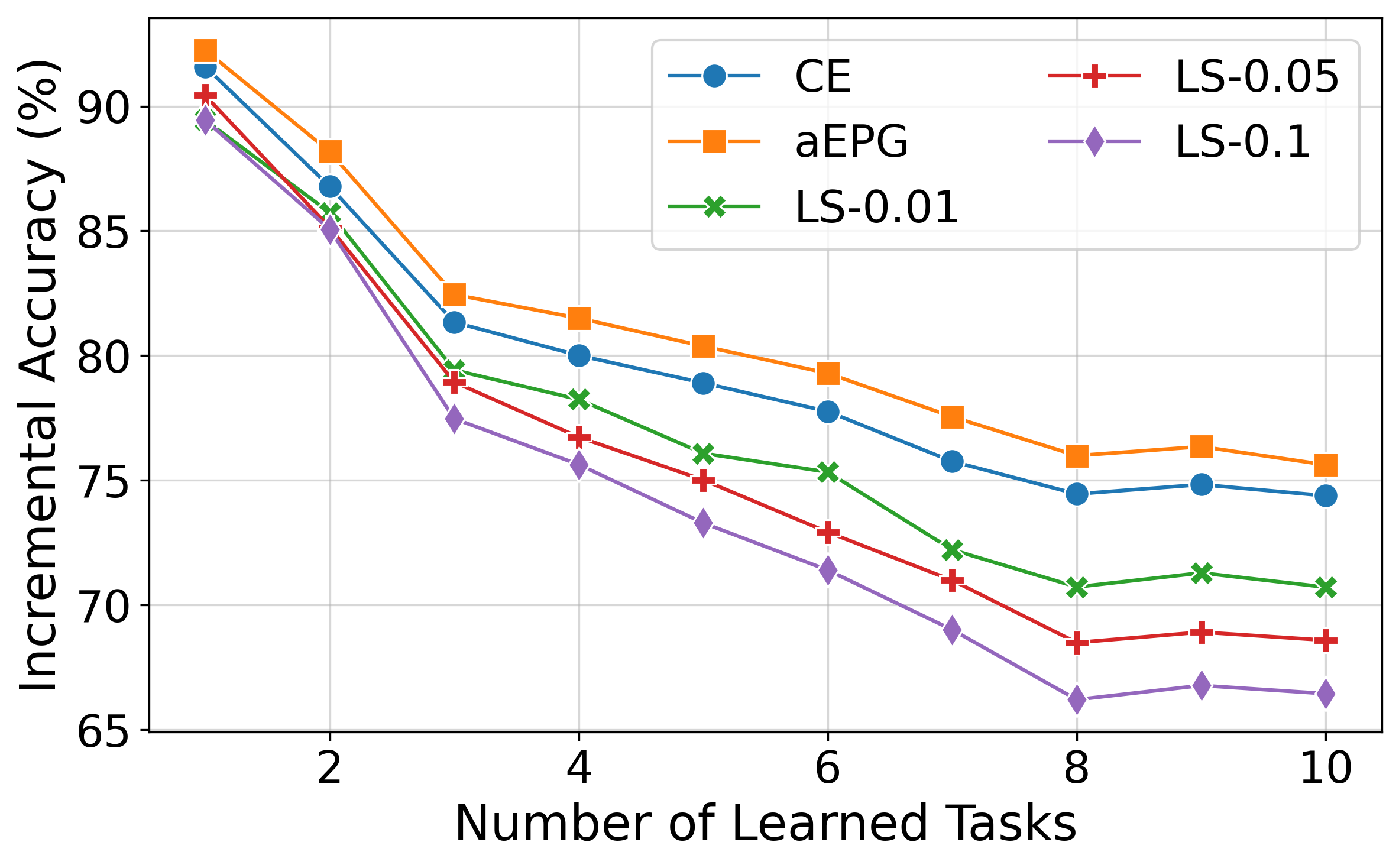}
\includegraphics[width=0.55\linewidth]{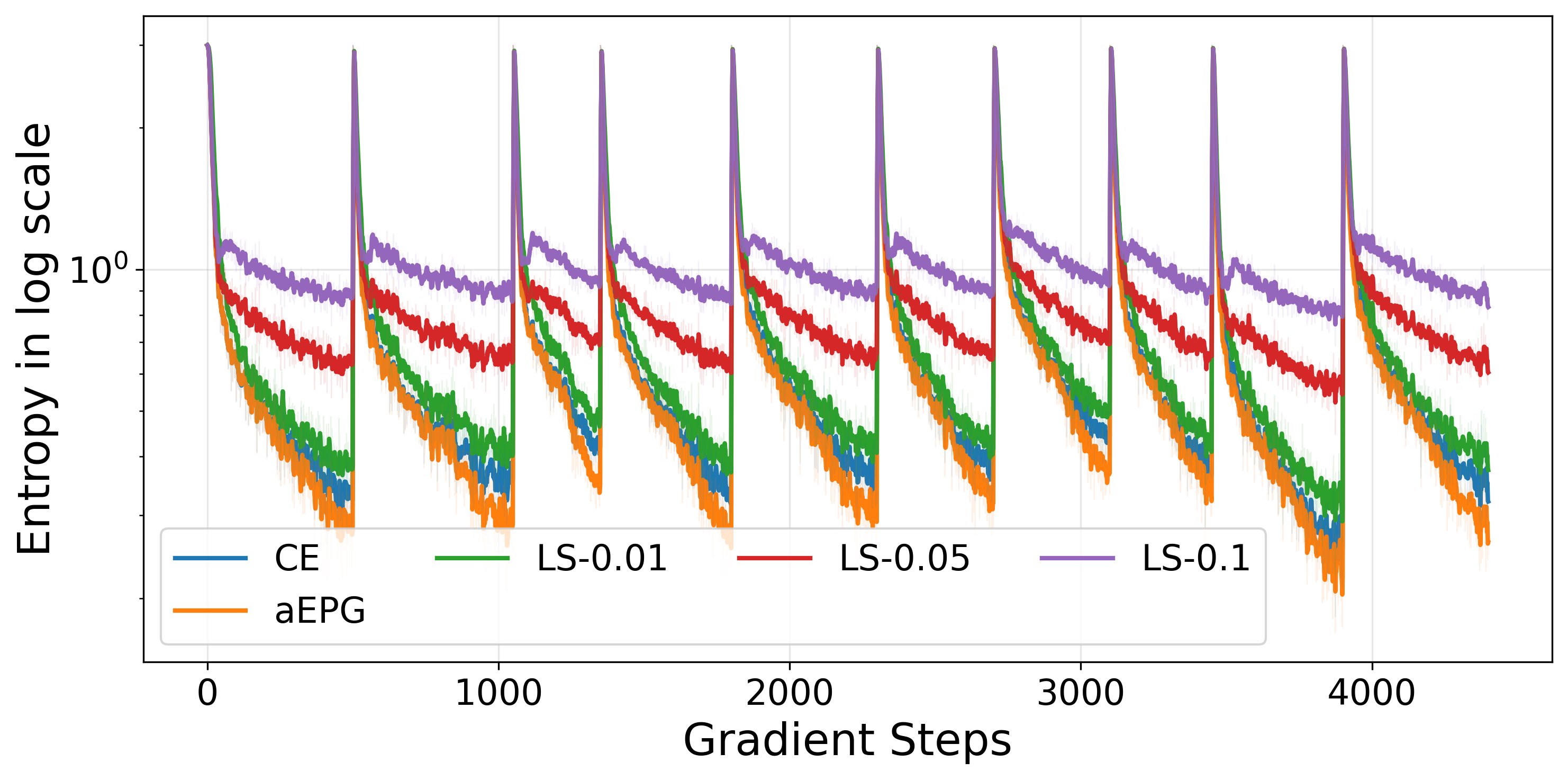}}
\subfigure[Confidence penalty]{\includegraphics[width=0.44\linewidth]{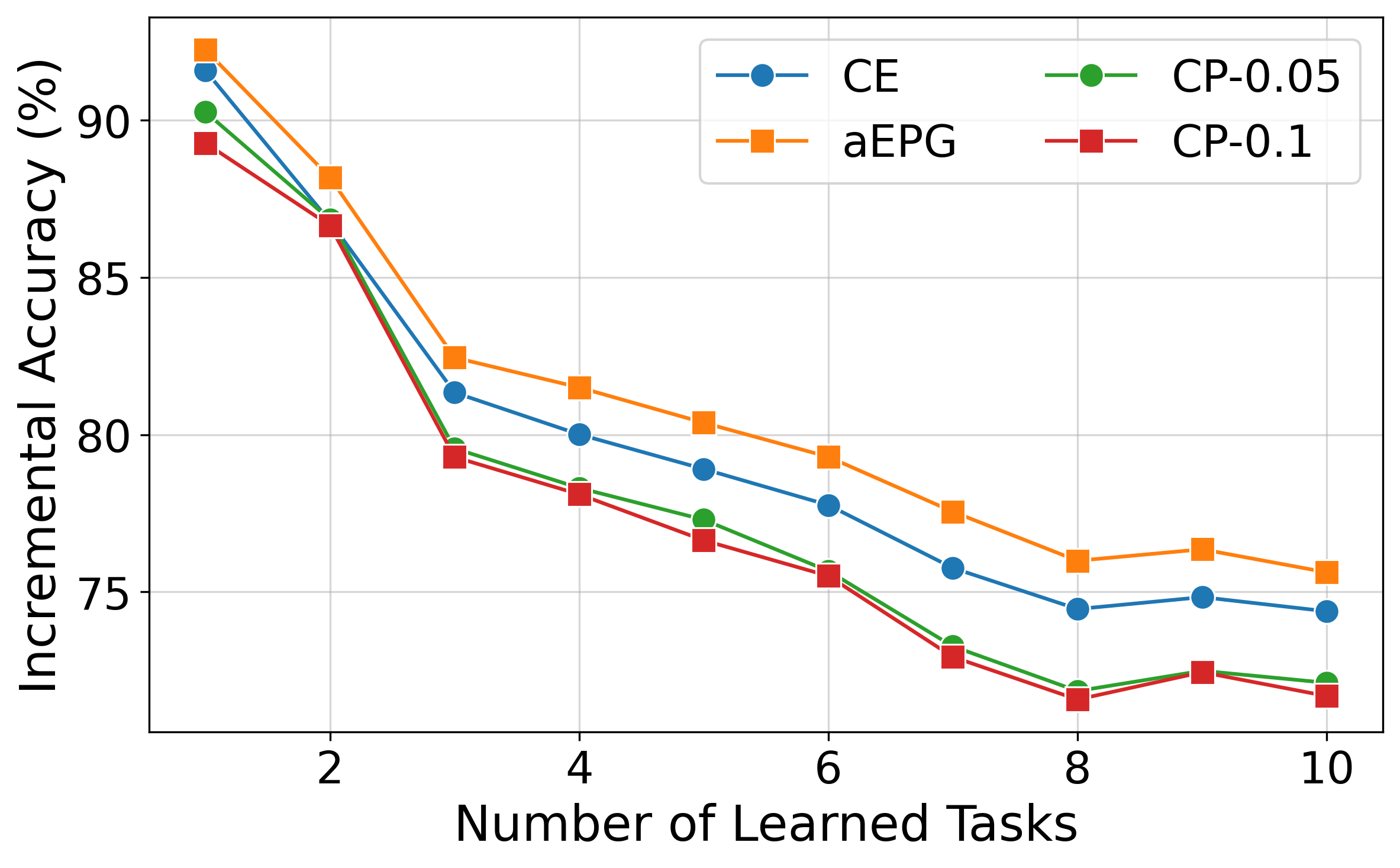}
\includegraphics[width=0.55\linewidth]{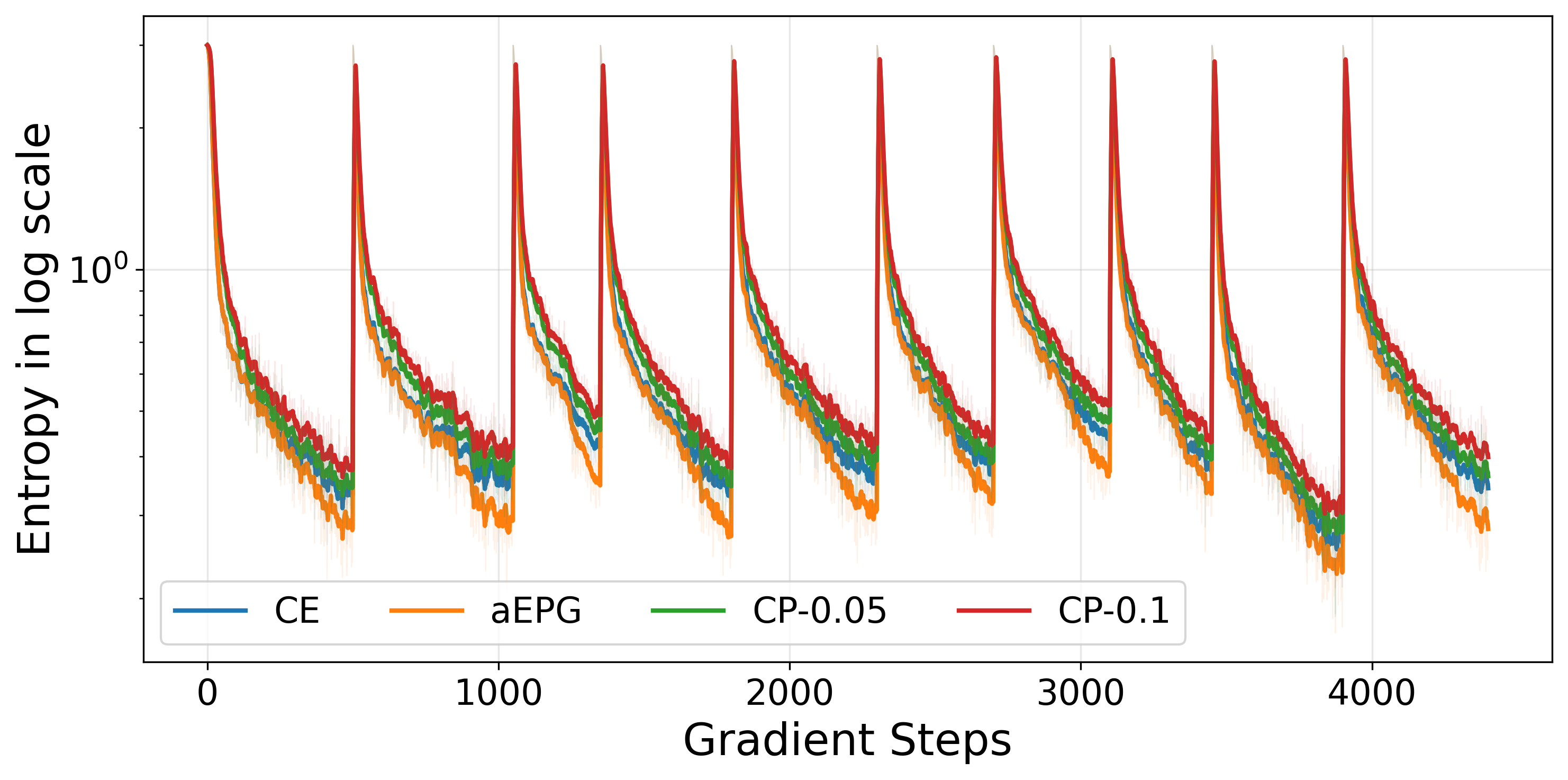}}
    \caption{The effect of entropy regularization strength in focal loss, label smoothing, confidence penalty with Split-ImageNetR.}
    \label{fig:focal_entropy_strength}
\end{figure}
\subsection{The effect of $\tau$}
    \begin{table}[h]
    \caption{The effect of $\tau$ in adaptive entropy annealing of aEPG}
    \vspace{4pt}
\centering
\begin{tabular}{lcccc}
    \toprule
         & $\alpha_{start}$ & $\alpha_{end}$ & Split-ImageNet-R & CLRS25 \\ \midrule
        $\tau = 4$ & 0.9820 & 0.0180 & 75.9 & 76.3 \\
        $\tau = 6$ & 0.9975 & 0.0025 & 75.9 & 76.2 \\
        $\tau = 8$ & 0.9997 & 0.0003 & 75.6 & 76.5 \\ \bottomrule
    \end{tabular}
\label{tab:tau}
\end{table}
\end{document}